\newif\ifarxiv
\DeclareFontFamily{OT1}{pzc}{}
\DeclareFontShape{OT1}{pzc}{m}{it}{<-> s * [1.10] pzcmi7t}{}
\DeclareMathAlphabet{\pzccal}{OT1}{pzc}{m}{it}
\title{Replicable Reinforcement Learning}
\author{
Eric Eaton\thanks{Department of Computer and Information Sciences, University of Pennsylvania. The research presented in this paper was partially supported by the DARPA SAIL-ON program under contract HR001120C0040, the DARPA ShELL program under agreement HR00112190133, and the Army Research Office under MURI grant W911NF20-1-0080.} \and
Marcel Hussing \footnotemark[1] \and 
Michael Kearns \thanks{Department of Computer and Information Sciences, University of Pennsylvania.} \and 
Jessica Sorrell \thanks{Department of Computer and Information Sciences, University of Pennsylvania. Supported by the Simons Foundation Collaboration on the Theory of Algorithmic Fairness}}
\author{%
  Eric Eaton\\
  University of Pennsylvania\\
  Philadelphia, PA 19104 \\
  \texttt{eeaton@seas.upenn.edu} \\
  \And
  Marcel Hussing \\
  University of Pennsylvania\\
  Philadelphia, PA 19104 \\
  \texttt{mhussing@seas.upenn.edu} \\
   \AND
  Michael Kearns \\
  University of Pennsylvania\\
  Philadelphia, PA 19104 \\
  \texttt{mkearns@cis.upenn.edu} \\
   \And
   Jessica Sorrell\\
  University of Pennsylvania\\
  Philadelphia, PA 19104 \\
\texttt{jsorrell@seas.upenn.edu} \\
}
\newcommand{\repmax}{\mathsf{RepRMAX}}
\newcommand{\rstat}{\mathsf{rSTAT}}
\newcommand{\rpvi}{\mathsf{rPVI}}
\newcommand{\rupdate}{\mathsf{RepUpdateK}}
\newcommand{\mdp}{\mathcal{M}}
\newcommand{\states}{\mathcal{S}}
\newcommand{\actions}{\mathcal{A}}
\newcommand{\rew}{R}
\newcommand{\transitions}{P}
\newcommand{\optq}{Q^{*}(s, a)}
\newcommand{\bellopt}{\mathcal{T}}
\newcommand{\tsamp}{S}
\newcommand{\gm}{G_{\mathcal{\mdp}}}
\newcommand{\parallelsampling}{\mathbf{PS}(\gm)}
\newcommand{\modelmdp}{\widehat{\mdp}_K}
\newcommand{\modelpol}{\pi_{\hat{\mdp}_K}}
\newcommand{\modelpoli}{\pi_{\hat{\mdp}_{K,i}}}
\newcommand{\appoptpi}{\widehat{\pi}^{*}}
\newcommand{\explore}{\mathsf{Explore}}
\renewcommand{\Pr}{\textbf{Pr}}
\newcommand{\E}{\mathop{\mathbb{E}}}
\newcommand{\cX}{\mathcal{X}}
\newcommand{\cY}{\mathcal{Y}}
\newcommand{\algo}{\pzccal{A}}
\newcommand{\argmax}{\text{arg}\max}
\DeclareDocumentCommand \norm { o m }{{\lVert #2 \rVert_#1}}
\newcommand{\eqdef}{\mathrel{\mathop:}=}
\newtheorem{definition}{Definition}[section]
\newtheorem{lemma}{Lemma}[section]
\newtheorem{theorem}{Theorem}[section]
\newtheorem{observation}{Observation}[section]
\newtheorem{claim}{Claim}[section]
\declaretheoremstyle[%
  spaceabove=10pt,%
  spacebelow=2pt,%
  headfont=\normalfont\itshape,%
  postheadspace=0em,%
  qed=%
]{prfstyle} 
\declaretheorem[name={Proof Sketch},style=prfstyle,unnumbered,
]{prfsketch}
\begin{document}

\maketitle

\begin{abstract}
The \emph{replicability crisis} in the social, behavioral, and data sciences has led to the formulation of algorithm frameworks for replicability --- i.e., a requirement that an algorithm produce identical outputs (with high probability) when run on two different samples from the same underlying distribution. While still in its infancy, provably replicable algorithms have been developed for many fundamental tasks in machine learning and statistics, including statistical query learning, the heavy hitters problem, and distribution testing. In this work we initiate the study of \emph{replicable reinforcement learning}, providing a provably replicable algorithm for parallel value iteration, and a provably replicable version of R-max in the episodic setting. These are the first formal replicability results for control problems, which present different challenges for replication than batch learning settings.
\end{abstract}

\section{Introduction} \label{sec:intro}

The growing prominence of machine learning (ML) and its widespread adoption across industries underscore the need for replicable research  \citep{wagstaff2012, pineau2021neurips}. Many scientific fields have suffered from this same inability to reproduce the results of published studies \citep{begley2012drugdevel}. Replicability in ML requires not only the ability to reproduce published results \citep{wagstaff2012}, as may be partially addressed by sharing code and data \citep{stodden2014implementing}, but also consistency in the results obtained from successive deployments of an ML algorithm in the same environment. However, the inherent variability and randomness present in ML pose challenges to achieving replicability, as these factors may cause significant variations in results. 

Building upon foundations of algorithmic stability~\citep{bousquet2002stability}, recent work in learning theory has established rigorous definitions for the study of supervised learning \citep{impagliazzo2022reproducibility} and bandit algorithms \citep{EKKKMV23} that are provably \emph{replicable}, meaning that algorithms produce identical outputs (with high probability) when executed on distinct data samples from the same underlying distribution. However, these results have not been extended to the study of control problems such as reinforcement learning (RL), that have long been known to suffer from stability issues~\citep{white1994imprecise, mannor2004bias, islam2017reproducibility, henderson2018matters}. These stability issues have already sparked research into robustness for control problems including RL \citep{khalil1996robust, nilim2005robust, iyengar2005robust}. Non-deterministic environments and evaluation benchmarks, the randomness of the exploration process, and the sequential interaction of an RL agent with the environment all complicate the ability to make RL replicable. Our work is orthogonal to that of the robustness literature and our goal is not to reduce the effect of these inherent characteristics, such as by decreasing the amount of exploration that an agent performs, but to develop replicable RL algorithms that support these characteristics.

Toward this goal, we initiate the study of replicable RL and develop the first set of RL algorithms that are provably replicable. 
We contend that the fundamental theoretical study of replicability in RL might advance our understanding of the aspects of RL algorithms that make replicability hard. In this work, we put on a similar lens as~\citet{impagliazzo2022reproducibility} and consider replicability as an algorithmic property that can be achieved simultaneously with exploration and exploitation. First, we show that it is possible to obtain a near-optimal, replicable policy given sufficiently many samples from every state in the environment. This notion is then naturally extended to replicable exploration.

Our contributions can be summarized as follows. We provide two novel and efficient algorithms to 
\ifarxiv
\begin{itemize}
    \item show that stochastic, sample-based value iteration can be done replicably and
    \item replicably explore the space of an MDP while also finding an optimal policy.
\end{itemize} 
We experimentally validate that our algorithms require much fewer samples than theory suggests.
\else 
\vspace{-1em}
\begin{itemize}
    \setlength\itemsep{-2pt}
    \setlength\itemindent{-1em}
    \item show that stochastic, sample-based value iteration can be done replicably and
    \item replicably explore the space of an MDP while also finding an optimal policy.
\end{itemize} 
\vspace{-1em}
We experimentally validate that our algorithms require much fewer samples than theory suggests.
\fi 
\section{Preliminaries} \label{sec:prelim}

\subsection{Reinforcement learning}

We consider the problem of solving a discounted Markov decision process (MDP) $\mdp = \{\states, \actions, \rew, \transitions, \gamma, \mu \}$ with state space $\states$, action space $\actions$, reward function $\rew$, transition kernel $\transitions$, discount factor $\gamma$, and initial state distribution $\mu$. We assume that the size of the state space $|\states|$ and number of possible actions $|\actions|$ are finite and not too large. Further, we assume that the rewards for every state-action pair are deterministic, bounded, and known. Relaxing assumptions on the reward function might not necessarily seem straightforward in our goal of replicable RL, as the stochastic reward would need to be made replicable. However, the case can be handled by our algorithms with minor modifications and only constant factor overhead. The goal is to find a policy $\pi: \states \mapsto \actions$ that maximizes the cumulative discounted reward $J_h = \sum_{k=h}^{\infty} \gamma^{k-h} \rew_k(s, a)$. We use the typical definitions of the value and Q-value functions for the expected cumulative discounted return from a state or state-action pair, respectively:
\begin{align*}
    V_{\pi}(s) &= \E_{\pi, \transitions}[J_h | s_h = s] & Q_{\pi}(s, a) &= \E_{\pi, \transitions}[J_h | s_h = s, a_h = a] \enspace .
\end{align*}
To show the various difficulties that come from trying to achieve replicability in RL, we consider two different settings to examine various components of the problem. 
\vspace{-0.5em}
\paragraph{Parallel sampling setting} First, we ask whether it is even possible to obtain a replicable policy from empirical samples without considering the challenges of exploration. For this, we can adopt the setting of generative models $\gm$, or more precisely, the parallel sampling setting. In the parallel sampling model, first introduced by \citet{kearns1998finitesample}, one has access to a parallel generative sampling subroutine $\parallelsampling$. A single call to $\parallelsampling$ will return, for every state-action pair $(s, a) \in \states \times \actions$, a randomly sampled next state $s' \in \states$ drawn from $\transitions(s'|s, a)$. The key advantage is that this model separates learning from the quality of the exploration procedure.
\begin{definition}[Generative model]
    Let $\mdp$ denote an arbitrary MDP. Then a generative model $\gm((s, a))$ is a randomized algorithm that, given a state-action pair $(s, a) \in \states \times \actions$, outputs a deterministic reward $\rew(s, a)$ and a next state $s'$ sampled from $\transitions(\cdot | s, a)$. 
\end{definition}

\begin{definition}[Parallel sampling]
    Let $\mdp$ denote an arbitrary MDP. Then a call to the parallel sampling subroutine $\parallelsampling$ returns exactly one sample $s_{i}' \sim \gm((s_i, a_i))$ for every state-action pair $(s_i, a_i)$ in $\states \times \actions$ of $\mdp$ using a generative model.
\end{definition}
\ifarxiv
\else 
\vspace{-1em}
\fi 
\paragraph{Episodic setting} The second setting we consider is one in which an algorithm does have to explore the MDP before it can obtain an optimal policy. More precisely, we consider an episodic setting where, in every episode $e \in \{1, 2, ..., E\}$, the agent starts in a position $s_0 \sim \mu$ and interacts with the environment for a fixed amount of time $H$. At any step $h \in [1, H]$, the agent is in some state $s_h$, selects an action $a_h$, receives a reward $r_h$ and transitions to a new state $s_{h+1}$. Gathering a trajectory $\tau = (s_0, a_0, r_0, .., s_H, a_H, r_H)$ of states, actions and rewards under policy $\pi$ can be thought of as a draw from a distribution $\tau \sim P^{\pi}_{\mdp}(\tau)$. We will omit the sub-and superscripts when clear from context. For consistency with the remaining analysis, we work with a $\gamma$-discounted version of the problem.

\subsection{Replicability}

We build on the recent framework by \citet{impagliazzo2022reproducibility}, which considers replicability as a property of randomized algorithms that take as input a dataset sampled i.i.d.~from an arbitrary distribution. They consider an algorithm to be replicable if, on two runs in which its internal randomness is fixed and its input data is resampled, it outputs the same result with high probability:

\begin{definition}[Replicability]\label{def:replicability}
Fix a domain $\cX$ and target replicability parameter $\rho \in (0,1)$. A randomized algorithm $\algo: \cX^n \rightarrow \cY$ is \emph{$\rho$-replicable} if for all distributions $D$ over $\cX$, randomizing over the internal randomness $r$ of $\algo$ and choice of samples $S_1, S_2$, each of size $n$ drawn i.i.d. from $D$, we have:
$\Pr_{S_1, S_2, r}[\algo(S_1; r) \neq \algo(S_2; r)] \leq \rho \enspace.$
\end{definition}

Several key tools that were introduced by~\citet{impagliazzo2022reproducibility} will prove useful or yield inspiration for the algorithms developed in this work. One of the key observations is that many of the computations in RL can be phrased as statistical queries, defined as follows:

\begin{definition}[Statistical query, \citep{kearns1998noise}]
Fix a distribution $D$ over $\mathcal{X}$ and an accuracy parameter $\alpha \in (0,1)$. A statistical query is a function $\phi: \mathcal{X} \rightarrow [0,1]$, and a mechanism $M$ answers $\phi$ with tolerance $\alpha$ on distribution $D$ if $a \gets M$ satisfies
$a \in [\E_{x\sim D}[\phi(x)] \pm \alpha]$. 
\end{definition}

We will make direct use of the replicable algorithm for answering statistical queries by~\citet{impagliazzo2022reproducibility} which will be useful to obtain replicable estimates of various measurements such as transition probabilities. We will refer to the replicable statistical query procedure as $\rstat$. We note that~\cite{impagliazzo2022reproducibility} also proves a lower-bound on the sample complexity required for replicable statistical queries, showing that the results below are essentially tight.

\begin{theorem}[Replicable statistical queries, \cite{impagliazzo2022reproducibility}]\label{thm:rep-sq}
 There is a $\rho$-replicable algorithm $\rstat$ such that for any distribution $D$ over $\mathcal{X}$, replicability parameter $\rho \in (0,1)$, accuracy parameter $\alpha \in (0,1)$, failure parameter $\delta \in O(\rho)$, and query $\phi:\mathcal{X} \rightarrow [0,1]$, letting $S$ be a sample of $n \in O\left(\frac{\log(1/\delta)}{(\rho - 2\delta)^2\alpha^2}\right)$ elements drawn i.i.d. from $D$, we have that $a \gets \rstat_{\alpha,\rho}(S, \phi)$ satisfies
$a \in [\E_{x\sim D}[\phi(x)] \pm \alpha]$ except with probability at most $\delta$ over the samples $S$.
\end{theorem}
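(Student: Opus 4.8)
The plan is to follow the standard ``empirical mean plus randomized rounding'' template. First, $\rstat_{\tau,\rho}$ draws the sample $S$ of size $n$ and computes the empirical mean $\hat\mu = \frac{1}{|S|}\sum_{x\in S}\phi(x)$. Since $\phi$ takes values in $[0,1]$, a Hoeffding bound gives $|\hat\mu - \E_{x\sim D}[\phi(x)]| \le \tau'$ except with probability at most $\delta$, provided $n \ge c\log(1/\delta)/\tau'^2$ for a universal constant $c$ and a precision parameter $\tau'$ to be fixed below. On its own this does not yield replicability, since $\hat\mu$ still depends on which sample was drawn; the whole point is to wash out that dependence.

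The key step is to snap $\hat\mu$ to a randomly shifted grid using the shared internal randomness $r$. Draw an offset $\alpha$ uniformly from $[0,w)$, where $w$ is a grid width to be chosen, consider the partition of $\mathbb{R}$ into intervals $\{[\alpha+kw,\alpha+(k+1)w) : k\in\mathbb{Z}\}$, and output the left endpoint (or midpoint) of the unique interval containing $\hat\mu$.

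For replicability: on two runs with samples $S_1,S_2$ and the same $r$, with probability at least $1-2\delta$ both empirical means lie within $\tau'$ of the true mean, hence within $2\tau'$ of each other; conditioned on this, the two means fall in different grid cells only if $\alpha$ (viewed on the circle $[0,w)$) lands in an arc of length at most $2\tau'$, which has probability at most $2\tau'/w$. A union bound gives replicability failure at most $2\delta + 2\tau'/w$, so it suffices to take $w \ge 2\tau'/(\rho-2\delta)$. For accuracy: off the failure event the output differs from $\hat\mu$ by at most $w$ and $\hat\mu$ differs from the true mean by at most $\tau'$, so it suffices that $w+\tau'\le\tau$. Setting $w = 2\tau'/(\rho-2\delta)$ and then choosing $\tau' = \Theta\!\big(\tau(\rho-2\delta)\big)$ small enough to satisfy $w+\tau'\le\tau$ makes both constraints hold simultaneously; substituting this $\tau'$ into the Hoeffding requirement gives $n \in O\!\left(\frac{\log(1/\delta)}{(\rho-2\delta)^2\tau^2}\right)$, which is the claimed bound (with $\varepsilon=\tau$).

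The main obstacle is the joint tuning of $w$ and $\tau'$: accuracy pushes both down, while replicability needs $w$ large relative to $\tau'$; what reconciles them is that $\rho$ can be taken small, at the cost of the $(\rho-2\delta)^{-2}$ blow-up in sample size. One must also be careful that the \emph{same} concentration event drives both the accuracy and the replicability arguments so the union bounds are legitimate, and that $\delta \in O(\rho)$ is exactly the regime in which $\rho - 2\delta$ is bounded below by a constant multiple of $\rho$. That the resulting dependence on $\rho$ and $\tau$ is essentially unavoidable follows from the matching lower bound of \citet{impagliazzo2022reproducibility}.
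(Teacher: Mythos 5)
Your proposal is correct and is essentially the same argument as the construction this theorem is imported from (\citet{impagliazzo2022reproducibility}): empirical mean, Hoeffding concentration to precision $\tau'$, and randomized rounding to a shifted grid of width $w$, with the accuracy budget split as sample error plus discretization error and replicability failure bounded by $2\delta + 2\tau'/w$ --- exactly the decomposition the paper itself relies on in Appendix~\ref{app:rpvi-prf}. The only nitpick is your closing remark that $\delta \in O(\rho)$ forces $\rho - 2\delta = \Omega(\rho)$ (it does not, e.g.\ $\delta = \rho/2$); this is harmless since your bounds, like the theorem's, carry the factor $(\rho - 2\delta)$ explicitly.
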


At a very high level, $\rstat$ uses its sample to empirically estimate the expected value of the statistical query on the target distribution. It then uses its internal randomness to pick an evenly-spaced set of canonical representatives from the $[0,1]$ interval, and returns whichever canonical representative is closest to the empirical estimate. We note that the algorithm of~\cite{impagliazzo2022reproducibility} for replicably answering statistical queries is not only sample efficient, but also computationally efficient, as it has runtime polynomial in $1/\alpha, 1/\rho,$ and $\log(1/\delta)$.

\section{Replicable reinforcement learning} \label{sec:rep-rl}

To define replicability for the RL setting, we can adapt Definition~\ref{def:replicability} more or less exactly. The question that arises is which of the many RL objects should be made replicable? We separate the difficulty of replicability into three levels: replicability of the MDP, the value function, and the policy. Since these objects carry different amounts of information~\citep{farahmand2011actiongap}, the following relationships can be established. 

If we are able to replicably (and accurately) estimate an MDP, we can always replicably compute an (optimal) value function using standard techniques on our estimates, and from replicable value functions we can obtain the corresponding policies. Note that the inverse is not true as we lose information when going from MDP to value function and then policy. As a result, we expect that replicable estimation of MDPs is the hardest setting in stochastic RL, followed by replicable value function, and then policy estimation.

For replicability of control problems, a sensible measure to ask for is the production of identical policies, which are the ultimate object of primary interest. We would at least like to ensure that with high probability, we can obtain identical optimal policies across two runs of our RL procedures:

\begin{definition}[Replicable policy estimation] \label{def:pi-replicability}
Let $\algo$ be a policy estimation algorithm that outputs a policy $\appoptpi : \states \mapsto \actions$ given a set of trajectories $S$ sampled from an MDP. Algorithm $\algo$ is $\rho$-replicable if, given independently sampled trajectory sets $S_1$ and $S_2$, and yielding policies $\appoptpi_1$ and $\appoptpi_2$, it holds for all states $s \in \states$ and actions $a \in \actions$ that
\begin{align*}
    &\Pr_{S_1, S_2, r}[\widehat{\pi}^{* (1)}(a | s) \neq \widehat{\pi}^{* (2)}(a | s) ] \leq \rho \\
    &\text{s.t. } \widehat{\pi}^{* (1)}(a | s) \gets \algo(S_1; r) \quad \land \quad \widehat{\pi}^{* (2)}(a | s) \gets \algo(S_2; r) \enspace,
\end{align*} 
where $r$ represents the internal randomness of $\algo$. Trajectory sets $S_1$ and $S_2$ may potentially be gathered from the environment during the execution of an RL algorithm.
\end{definition}
While this definition is the weakest we would like to achieve, the results we present in this paper provide stronger guarantees. Our Replicable Phased Value Iteration builds on~\citep{kearns1998finitesample} and ensures replicability of value functions, while our Replicable Episodic R-max follows~\citep{kearns1998nearopt, BraTen03} and provides replicability of full MDPs. Equivalent formal definitions for replicable value and MDP estimation are given in Appendix~\ref{app:def}.

Current algorithms for sample-based RL problems will struggle to satisfy Definition~\ref{def:pi-replicability} of replicability and output different policies even in simple environments (see Figure~\ref{fig:policy_difference}). In some cases, this may not be problematic since the resulting policies will still be $\varepsilon$-optimal, but in practice it is often hard to tell when that is the case. Fixing replicability will support the identification of problematic solutions and encourage procedures that yield more stable solutions in the long run. Varying policies can, for example, arise from sample uncertainty, insufficient state-space coverage, or differing exploration. In order to achieve replicability, all of the aforementioned challenges need to be addressed, which makes for an intricate but interesting problem. With this in mind, the next section will introduce a first set of formally replicable algorithms that separate out some of these challenges.

\begin{figure}
    \begin{subfigure}[b]{\textwidth}
        \centering
        \includegraphics[width=0.5\linewidth]{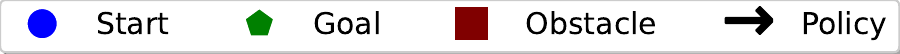}
    \end{subfigure} \\
    \begin{minipage}[t]{0.34\linewidth}
    \captionsetup[subfigure]{aboveskip=1pt}
    \centering
        \begin{subfigure}[b]{0.99\textwidth}
            \centering
            \includegraphics[height=2.0cm, trim={0.0cm, 0.cm, 3.6cm, 0.cm}, clip]{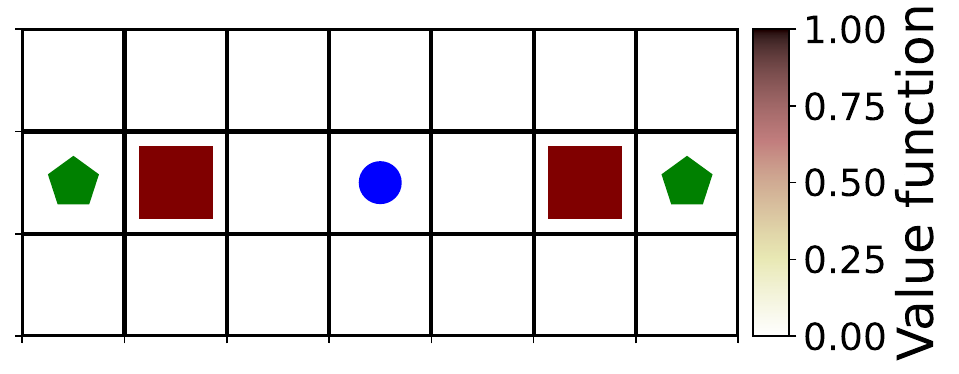}
        \end{subfigure}%
        \label{fig:mdp}
    \end{minipage}%
    \hspace{0.5em}
    \begin{minipage}[t]{0.64\linewidth}
    \captionsetup[subfigure]{aboveskip=1pt}
    \centering
    \hspace{-0.9cm}
        \begin{subfigure}[b]{0.48\textwidth}
            \centering
                \includegraphics[height=2.0cm, trim={0.2cm 0.0cm 3.6cm 0.0cm}, clip]{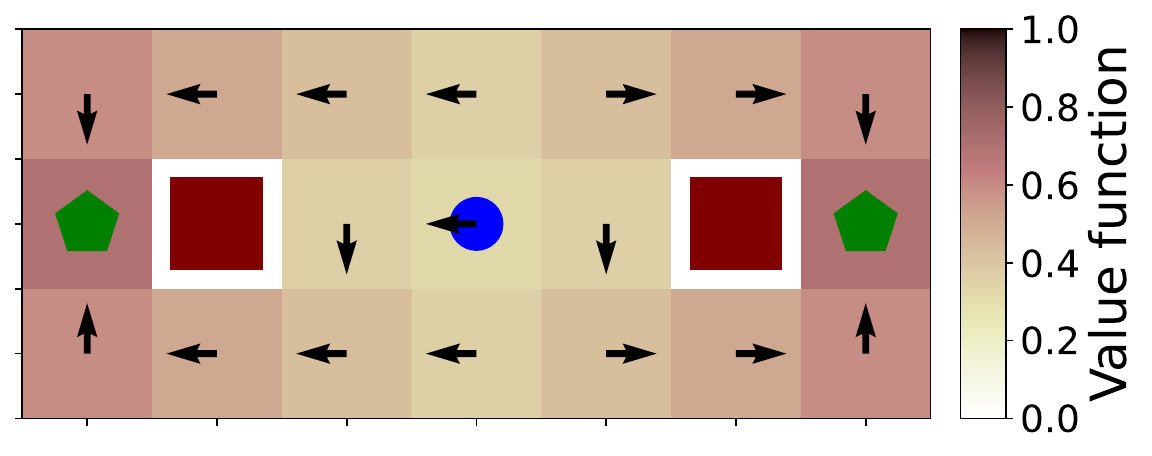}
        \end{subfigure}%
        \hspace{-0.1cm}
        \begin{subfigure}[b]{0.48\textwidth}
            \centering
            \includegraphics[height=2.0cm, trim={0.2cm 0.0cm 0.0cm 0.0cm}, clip]{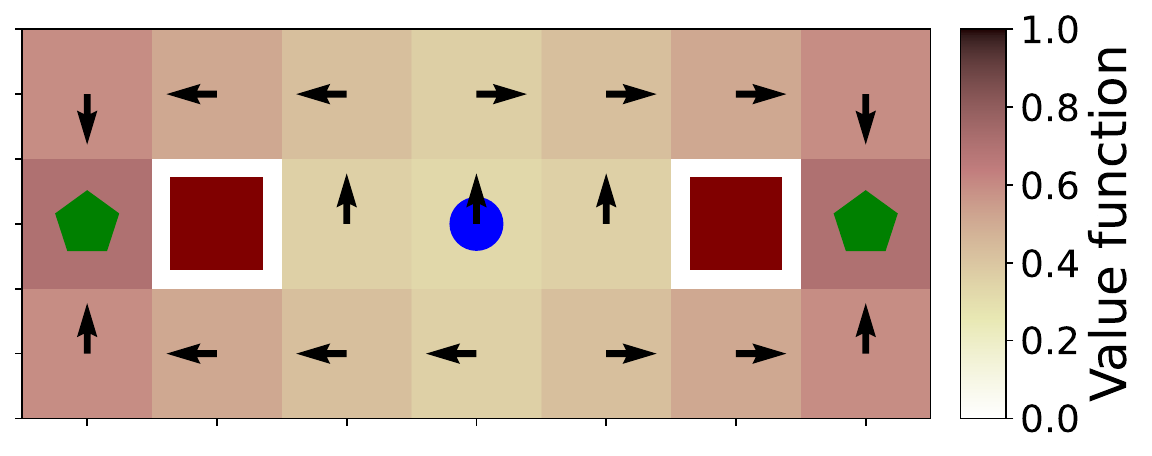}
        \end{subfigure}%
    \end{minipage}%
    \vspace{-0.5em}
    \caption{The GridWorld for our experiments (left) and two different policies that were generated by the Phased Q-learning Algorithm on this gridworld (center and right). Following the first policy (center) more likely reaches the left goal while following the right policy more likely reaches the right goal. All states except the goals have 0 reward. The actions are up, down, left and right; there is a 30\% chance that after choosing an action the agent moves left or right of the target direction.}
    \label{fig:policy_difference}
\end{figure}

\section{Algorithms} \label{sec:algos}

\subsection{Replicable phased value iteration}

The first question we answer positively is whether it is even possible to achieve replicability when the samples are drawn i.i.d. from the same distribution. For this, we use the parallel sampling model described in section~\ref{sec:prelim}. This model is well-suited for the task as it allows us to analyze sample-based value iteration independent of the exploration policy that collects the samples.

We provide a replicable version of indirect Phased Q-learning~\citep{kearns1998finitesample}, which was later also referred to as Phased Value Iteration~\citep{kakade2003samplecompl}. In brief, the algorithm iterates $T$ times and at every iteration makes $m$ calls to $\parallelsampling$, computes an approximate value estimate for every state and does one round of value updates. \citet{kearns1998finitesample} provide the following Lemma~\ref{lem:phasedvi-convergence} to show the optimality of the original procedure. 
\begin{restatable}[Phased Q-learning convergence, \protect\citep{kearns1998finitesample}]{lem}{pviconvergence} \label{lem:phasedvi-convergence}
    Suppose the number of calls to $\parallelsampling$ is chosen such that the value function estimates produced in every round by Phased Q-learning are sufficiently accurate. For any MDP $\mdp$, Phased Q-learning converges to a policy $\appoptpi$ whose return is within $\varepsilon$ of the optimal policy $\pi^{*}$. 
\end{restatable}

\begin{algorithm}[b!]
\caption{Replicable Phased Value Iteration ($\rpvi$) \\
Parameters: accuracy $\varepsilon$, failure probability $\delta$, replicability failure probability $\rho$ \\
Input: Generative Model $\gm$ \\
Output: $\varepsilon$-optimal policy $\appoptpi$}
\label{alg:rep-phased}
\begin{algorithmic}
    \State Initialize $\widehat{Q}_0(s, a)$ to $0$ for all $(s, a) \in \states\times\actions$
    \State For all $s \in \states$, let $\phi_{Q}(s) \coloneqq \max_{a} Q(s, a)$
    \For{$t=0, \cdots, T - 1$} 
        \State $S \gets (\parallelsampling)^m$ \Comment{\parbox[t]{.55\linewidth}{do $m$ calls to $\parallelsampling$ and store next-states in a map from state-action pairs $(s, a)$ to next states $S[(s, a)]$.}} \\
        \For{$(s, a) \in \states \times \actions$} 
        \State $\widehat{V}(s') \gets \rstat(S[(s, a)], \phi_{\widehat{Q}_{t}}(s'))$
        \State $\widehat{Q}_{t + 1}(s, a) \gets \rew(s, a) + \gamma \widehat{V}(s')$
        \EndFor
    \EndFor
    \State \Return $\hat{\pi}^* = \argmax_a \widehat{Q}_T(s, a) $
\end{algorithmic}
\end{algorithm}

Our algorithm operates similarly, but we would like to achieve replicability on top of optimality. We use a randomized rounding procedure for statistical query estimation ($\rstat$) provided by~\citet{impagliazzo2022reproducibility} to compute the value estimates at every iteration. For this, we assume that the value function is normalized to the interval $[0, 1]$. A detailed description of our algorithm is provided in Algorithm~\ref{alg:rep-phased}. The Replicable Phased Value Iteration ($\rpvi$) algorithm we provide satisfies Definition~\ref{def:pi-replicability} and produces $\varepsilon$-optimal policies. It goes even one step further and produces not only replicable policies but replicable value functions. This is formalized in the following Theorem~\ref{thm:rpvi}.

\begin{theorem} \label{thm:rpvi}
Let $\varepsilon \in (0, 1)$ be the accuracy and $\rho \in (0, 1)$ be the replicability parameter. Let $\delta \in (0, 1)$ be the sample failure probability. Set the number of calls to $\parallelsampling$ at every iteration to
\begin{equation*}
    m = O\left(\cfrac{\log^2(1/\varepsilon)|\states|^2|\actions|^2}{ \varepsilon^2(\rho - 2\delta)^2}\log \biggl(\cfrac{|\states||\actions| }{\delta} + \log\log(1 / \varepsilon)\biggr)\right) \enspace
\end{equation*} where $O$ supresses the dependence on $\gamma$. In two runs $(1)$ and $(2)$ with shared internal randomness, Algorithm~\ref{alg:rep-phased} produces identical policies, s.t.  $\Pr[\widehat{\pi}^{*(1)} \neq \widehat{\pi}^{*(2)}] \in O(\rho)$. In every run, the produced policies $\appoptpi$ achieve return at most $\varepsilon$ less than the optimal policy $\pi^{*}$ with all but probability $O(\delta)$. 
\end{theorem}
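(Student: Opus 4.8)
The plan is to analyze $\rpvi$ along two axes in parallel — accuracy and replicability — exploiting the fact that each iteration's value update is nothing but a collection of $|\states||\actions|$ independent calls to $\rstat$, one per state-action pair. First I would fix the per-call parameters. We want each $\rstat$ call to answer the query $\phi_{\widehat{Q}_t}$ with tolerance $\tau$ and per-call replicability $\rho'$; by Theorem~\ref{thm:rep-sq}, a sample of size $O\bigl(\log(1/\delta')/((\rho'-2\delta')^2\tau^2)\bigr)$ suffices. The number of iterations $T$ needed for Phased Value Iteration to reach $\varepsilon$-optimality is the standard $T = O\bigl(\frac{1}{1-\gamma}\log\frac{1}{\varepsilon(1-\gamma)}\bigr)$ (this is the $\gamma$-dependence being suppressed), and the per-iteration tolerance must be $\tau = O(\varepsilon(1-\gamma))$ so that the accumulated error over $T$ rounds stays below $\varepsilon$ — this is exactly the hypothesis of Lemma~\ref{lem:phasedvi-convergence}, so the accuracy half of the theorem reduces to invoking that lemma once the $\rstat$ tolerance bound holds at every $(s,a)$ in every round, which by a union bound over the $T|\states||\actions|$ calls fails with probability at most $T|\states||\actions|\delta'$; setting $\delta' = \delta/(T|\states||\actions|)$ gives total failure $O(\delta)$.

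For replicability, the key structural observation is that $\rstat$ is replicable \emph{per query}, and replicability composes: if two runs share internal randomness, then conditioned on all $\rstat$ calls in rounds $1,\dots,t-1$ having agreed across the two runs, the map $s\mapsto\widehat{Q}_t(s,\cdot)$ is identical in both runs, so the round-$t$ queries $\phi_{\widehat{Q}_t}$ are literally the same function in both runs, and hence each round-$t$ $\rstat$ call (sharing its randomness) disagrees with probability at most $\rho'$. By a union bound over all $T|\states||\actions|$ calls, the probability that the two runs ever diverge — and therefore that the final $\widehat{Q}_T$, and a fortiori the greedy policy $\argmax_a\widehat{Q}_T(s,a)$, differ — is at most $T|\states||\actions|\rho'$. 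Setting $\rho' = \rho/(T|\states||\actions|)$ yields the claimed $O(\rho)$ bound; substituting $\rho'$, $\delta'$, $\tau$, and $T$ into the $\rstat$ sample-complexity expression and collecting the $|\states|,|\actions|,\log(1/\varepsilon)$ factors gives the stated $m$ (the $\log^2(1/\varepsilon)$ arising from one factor of $\log(1/\varepsilon)$ in $T^2$ inside the squared denominator and the $1/\varepsilon^2$ from $\tau^{-2}$, with the inner $\log\bigl(\frac{|\states||\actions|}{\delta}+\log\log(1/\varepsilon)\bigr)$ coming from $\log(1/\delta')$).

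One subtlety worth flagging: the greedy policy is obtained by an $\argmax$ over $\widehat{Q}_T(s,\cdot)$, so strictly speaking, to get identical \emph{policies} one needs identical $\widehat{Q}_T$ values (with a fixed tie-breaking rule), which the above gives — there is no need for a separate "value gap" argument here because we are propagating exact equality of the $\widehat{Q}$ tables, not merely closeness. This is why the theorem can promise replicable value functions and not just replicable policies. The main obstacle I anticipate is bookkeeping rather than conceptual: one must verify that the union-bound splitting of $\rho$ and $\delta$ across $T|\states||\actions|$ calls is consistent with the constraint $\delta' \in O(\rho')$ required by Theorem~\ref{thm:rep-sq}, and that the resulting $(\rho'-2\delta')$ term in the denominator is still $\Theta(\rho')=\Theta(\rho/(T|\states||\actions|))$; as long as $\delta$ is taken to be a sufficiently small constant fraction of $\rho$ this holds, and the final $m$ depends on $(\rho-2\delta)$ exactly as written. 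The remaining work — confirming the Phased Value Iteration error recursion contracts at rate $\gamma$ and that $\tau=\Theta(\varepsilon(1-\gamma))$ is the right per-step budget — is routine and is precisely the content of the cited Lemma~\ref{lem:phasedvi-convergence}.
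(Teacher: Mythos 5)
Your proposal is correct and follows essentially the same route as the paper's proof: an induction showing the $\widehat{Q}$ tables (and hence the greedy policies) are exactly equal across two runs with shared randomness whenever all $\rstat$ calls agree, accuracy via Lemma~\ref{lem:phasedvi-convergence} with $\tau = \Theta(\varepsilon(1-\gamma))$ and $T = O\bigl(\tfrac{1}{1-\gamma}\log\tfrac{1}{\varepsilon(1-\gamma)}\bigr)$, and a union bound splitting $\rho$ and $\delta$ over the $T|\states||\actions|$ queries to recover the stated $m$. The only difference is cosmetic: you invoke Theorem~\ref{thm:rep-sq} as a black box per query, whereas the paper unrolls the $\rstat$ internals (splitting tolerance into sampling error and rounding-region width), which amounts to the same bound.
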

\begin{prfsketch}
    ~We give a sketch for the proof of the theorem here and refer the reader to a full proof in Appendix~\ref{app:rpvi-prf}. Assume that we can get replicable and accurate estimates of the value function expectations from our $\rstat$ procedure. One can show by induction that the algorithm consistently produces the same value functions in every iteration. Lemma~\ref{lem:repmax-convergence} guarantees the convergence to an optimal policy. Finally, we can use union and Chernoff bounds to pick a sufficiently large sample for our $\rstat$ queries to be replicable and accurate and satisfy our assumption.
\end{prfsketch}

An interesting observation is that $\rpvi$ discretizes the space of values as a function of the $\varepsilon$-parameter and $\gamma$ (see Appendix~\ref{app:rpvi-prf}). As a result, replicability becomes harder for larger values of $\gamma$ as discretization intervals become smaller and we require more samples to obtain an equally sized $\rho$. This is intuitive as we need to account for more potential future states that might impact our estimates.

The number of samples to compute a replicable value function is at most $O(\log^2 (1/\epsilon)|\states|^2|\actions|^2 / \rho^2)$ times larger than computing a non-replicable one~\citep{kearns1998finitesample}. Still, a key observation of the original Phased Q-learning result was that it is sufficient for every state-action pair to have a sample size logarithmic in $|\states||\actions|$, making the procedure cheaper than estimating the full transition dynamics of an MDP. The cost of replicability is the loss of this property. However, we note that $\rpvi$ does not yield replicable transition probability estimation. 
Using the idea of $\rstat$ queries to obtain transition estimates turns out to be significantly more expensive than the replicable value estimation done by Algorithm~\ref{alg:rep-phased} (see Appendix~\ref{app:rep-app-mpd}). Our results retain the notion that direct value estimation is much cheaper than estimating the full transition kernel even in the presence of replicability.

\subsection{Replicable RL with exploration}

Next, we consider the setting of episodic exploration. We show that, despite the stochastic nature of exploration, it is possible to guarantee replicability while still outputting an $\varepsilon$-optimal policy. 

We take the R-max algorithm of~\cite{BraTen03} as the starting point for our replicable algorithm $\repmax$ (Algorithm~\ref{alg:repmax}). It proceeds in rounds where the agent interacts with the environment for multiple episodes. The collection of trajectories encountered during exploration is used to incrementally build a model $\widehat{\mdp}$ of the underlying MDP $\mdp$. The algorithm implicitly partitions the set of state-action pairs $ \states \times \actions$ into two groups: known and unknown. All $(s,a) \in \states \times \actions$ are initialized to be unknown. While a state is unknown, the model $\widehat{\mdp}$ maintains that $(s,a)$ is a self-loop with probability $1$, and that $(s,a)$ has maximum reward, thereby promoting exploration of unknown states. After a state-action pair $(s,a)$ has been visited sufficiently many times, it is added to the collection of known states $K$ and its transition probabilities $\widehat{\transitions}$ and reward $\widehat{\rew}$ are updated with an empirical approximation of $\widehat{\transitions}_K(s' \mid s, a)$ for all $s' \in \states$ and the observed reward $\rew$, respectively. After every update, the policy $\pi_{\widehat{\mdp}_K}$ is computed as the optimal policy of the current model estimate. 

\ifarxiv
\begin{algorithm}[t]
\caption{Replicable Episodic R-max ($\repmax$) \\
Parameters: Accuracy $\varepsilon$, accuracy failure probability $\delta$, replicability failure probability $\rho$, horizon $H$ \\
Input: MDP $\mathcal{M}$, maximum reward $R_{\max}$ \\
Output: $\varepsilon$-optimal policy $\modelpol$}
\begin{algorithmic}
\State Initialize $\modelpol$ to a random policy, counters for state-action-visitation $n(s, a)$ to 0
\State Initialize $K$, the set collecting known state-action pairs, to the empty set $\emptyset$
\State Initialize $S$, the set collecting trajectories to be used for estimating transition probabilities, to $\emptyset$
\State Initialize $\modelmdp$ as 
\State $\quad \quad \widehat{\transitions}_K(s' | s, a) \coloneqq \mathbbm{1}[s' = s] \text{ for all } (s,a,s')$
\State $\quad \quad \widehat{\rew}_K(s, a) \coloneqq R_{\max} \text{ for all } (s,a)$
\State $i=1$
\While{$\modelpol$ is not $\varepsilon$-optimal}
    \State Collect a sample of trajectories $S_i \gets P(\tau)^m$ and add $S_{i}$ to $S$ 
    \State $K_i \gets \rupdate(S_i, K, \{n(s,a)\}_{(s,a)\in \states \times \actions})$, identify new known states
    \State For all $(s,a) \in K_i$, let $S[(s,a)]$ be the multiset of $s'$ visited from $(s,a)$ for all $\tau \in S$
   \State For all $s' \in \states$, let $\phi_{s'}(s)\coloneqq \mathbbm{1}[s = s']$
    \State Update $\modelmdp$ for all $(s,a) \in K_i$:
    \State $\quad \quad \widehat{\transitions}_K(s' | s, a) \coloneqq 
            \rstat(S[(s,a)], \phi_{s'})$
    \State $\quad \quad \widehat{\rew}_K(s, a) \coloneqq
            \rew(s, a)$
\State $K = K \cup K_i$
  \State Compute $\modelpol$ from $\modelmdp$
\EndWhile \\
\Return $\modelpol$ 
\end{algorithmic}
\label{alg:repmax}
\end{algorithm}

\else

\begin{algorithm}[t]
\caption{Replicable Episodic R-max ($\repmax$) \\
Parameters: Accuracy $\varepsilon$, accuracy failure probability $\delta$, replicability failure probability $\rho$, horizon $H$ \\
Input: MDP $\mathcal{M}$, maximum reward $R_{\max}$ \\
Output: $\varepsilon$-optimal policy $\modelpol$}
\begin{algorithmic}
\State Initialize $\modelpol$ to a random policy, counters for state-visitation $n(s, a)$ to 0
\State Initialize $K$, the set collecting known state-action pairs, to the empty set $\emptyset$
\State Initialize $S$, the set collecting trajectories to be used for estimating transition probabilities, to $\emptyset$
\State Initialize $\modelmdp $ as $\widehat{\transitions}_K(s' | s, a) \coloneqq \mathbbm{1}[s' = s]$ for all $(s,a,s')$ and $\widehat{\rew}_K(s, a) \coloneqq R_{\max}$ for all $(s,a)$
\State $i=1$
\While{$\modelpol$ is not $\varepsilon$-optimal}
    \State Collect a sample of trajectories $S_i \gets P(\tau)^m$ and add $S_{i}$ to $S$ 
    \State $K_i \gets \rupdate(S_i, K, \{n(s,a)\}_{(s,a)\in \states \times \actions})$, identify new known states
    \State For all $(s,a) \in K_i$, let $S[(s,a)]$ be the multiset of $s'$ visited from $(s,a)$ for all $\tau \in S$
   \State For all $s' \in \states$, let $\phi_{s'}(s)\coloneqq \mathbbm{1}[s = s']$
    \State Update $\modelmdp$ for all $(s,a) \in K_i$:
    $\widehat{\transitions}_K(s' | s, a) \coloneqq 
            \rstat(S[(s,a)], \phi_{s'}), \;
     \widehat{\rew}_K(s, a) \coloneqq
            \rew(s, a)$
\State $K = K \cup K_i$
  \State Compute $\modelpol$ from $\modelmdp$
\EndWhile \\
\Return $\modelpol$ 
\end{algorithmic}
\label{alg:repmax}
\end{algorithm}
\fi 

While convergence of Algorithm~\ref{alg:repmax} to an $\varepsilon$-optimal policy follows from familiar arguments~\citep{BraTen03}, proving replicability will require a great deal of additional care. To ensure that two runs of $\repmax$ (with shared internal randomness) converge to the same policy with high probability, we will show something even stronger: we prove that two such runs will with high probability perform the same sequence of updates to their respective models $\modelmdp$ and policies $\modelpol$.

To enforce this property, we introduce a sub-routine in Algorithm~\ref{alg:rep-update-known} which replicably identifies state-action pairs that should be added to the collection of known states. Guaranteeing that at each iteration the set of known states $K$ will be the same for two independent runs of the algorithm helps ensure that the models of the MDP $\widehat{\mdp}_K$, and consequently the policies $\modelpol$ learned at each iteration, will also be identical. 
To provide replicability, we will want to avoid using a fixed threshold for the number of times a state-action pair $(s,a)$ must be visited before it is considered ``known''. 
Under small deviations in realized transitions, a fixed threshold  might lead to some $(s,a)$ becoming known in one run of the algorithm and not another.
Instead, we use a randomized threshold. 

In a call to Algorithm~\ref{alg:rep-update-known}, the sample drawn at that round is used to estimate the expected number of visits to $(s,a)$ in a single trajectory, for every $(s,a)$. This estimate is added to the count $n(s,a)$, which maintains the sum, over all iterations thus far, of the estimated expected visits to $(s,a)$ from a single trajectory of the policy $\modelpol$ at that iteration. A new threshold $k'$ is then sampled uniformly from $[k,k+w]$. If $n(s,a) \geq k'$, it is added to the set of known states $K$. From standard concentration arguments, we know that for two runs of Algorithm~\ref{alg:repmax} with independent samples, the estimates of the total number of expected visits $n(s,a)$ will both be close to the true total number of expected visits, and therefore close to each other. Algorithm~\ref{alg:rep-update-known} will only make different decisions about adding an $(s,a)$ pair to the set of known states if the threshold $k'$ is chosen to fall between the two estimated values $n(s,a)$ from the two runs. Here, the concentration of $n(s,a)$ and the fact that $k'$ is randomized allows us to bound the probability that the threshold $k'$ is chosen to fall between the different $n(s,a)$ values. We show in Theorem~\ref{thm:repmax-replicable} that so long as the sample size $m$ that is used to estimate expected visits, and the window $w$ from which the randomized threshold in sampled, are taken to be large enough, the update to the set of known states at each round will be replicable.

\begin{algorithm}[h]
\caption{$\rupdate$ \\
Parameters: Accuracy failure probability $\delta$, replicability failure probability $\rho$\\
Input: Sample of trajectories $S_i$, set of known states $K$, set of state-visit counts $\{n(s,a)\}_{(s,a)\in \states \times \actions}$\\
Output: List of new known state-action pairs $K_i$}
\begin{algorithmic}
\State $K_i = \{(s,a) : (s,a) \in  \states\times\actions $ and  $(s,a) \not \in K \}$
\State $k' \gets \mathcal{U}[k,k+w]$
\For{$(s, a) \in K_i$}
\State $\widehat{c}_{s,a} = \tfrac{1}{|S_i|}\sum_{\tau \in S_i}\sum_{h=1}^H \mathbbm{1}[(s_h, a_h) = (s,a)]$
\State $n(s,a) = n(s,a) + \widehat{c}_{s,a}$
\If{$n(s,a) < k'$}
\State Remove $(s,a)$ from $K_i$
\EndIf
\EndFor \\
\Return $K_i$
\end{algorithmic}\label{alg:rep-update-known}
\end{algorithm}

Now that we have understood the intricacies on an intuitive level, we will prove convergence (Lemma~\ref{lem:repmax-convergence}) and replicability (Theorem~\ref{thm:repmax-replicable}) of Algorithm~\ref{alg:repmax}.

\ifarxiv 
\begin{restatable}[Convergence]{lem}{repmaxconvergence}
\label{lem:repmax-convergence}
    Let $\varepsilon \in (0,1)$ be the accuracy parameter, $\rho \in (0,1)$ the replicability parameter, and $\delta \in (0,1)$  be the sample failure probability. Furthermore, assume that ${\delta < \rho/4}$ and ${1 - \gamma > \frac{\sqrt{\varepsilon}\log^{1/4}(1/\delta)}{H|\actions|\log^{1/4}(1/\rho)}}$. Let 
    ${T\in \Theta(\frac{H|\states||\actions|}{\varepsilon} + \frac{H^2\log(1/\delta)}{\varepsilon^2})}$
    be a bound on the number of iterations
     and let ${m \in \tilde{O}\left(\tfrac{|\states|^2|\actions|^2T^4\log(1/\rho)}{\rho^2} \right)}$
    be the number of trajectories per iteration. Let $k = H$ be the lowest expected visit count of a state-action pair before it is known and 
    let  ${w \in O(k)}$ define the window $[k, k+w]$ for sampling the randomized threshold $k'$. Then with all but probability $\delta$, after $T$ iterations, Algorithm~\ref{alg:repmax} yields an $\varepsilon$-optimal policy. 
\end{restatable}
\else 
\begin{restatable}[Convergence]{lem}{repmaxconvergence}
\label{lem:repmax-convergence}
    Consider $\algo$ to be Algorithm~\ref{alg:rep-mdp}. Let $\varepsilon \in (0,1)$ be the accuracy parameter, $\rho \in (0,1)$ the replicability parameter, and $\delta \in (0,1)$,  be the sample failure probability, with $\delta < \rho/4$. Let $T\in \Theta(\frac{H|\states||\actions|}{\varepsilon} + \frac{H^2\log(1/\delta)}{\varepsilon^2})$ be a bound on the number of iterations of Algorithm~\ref{alg:repmax}. 
    Suppose $1 - \gamma > \frac{\sqrt{\varepsilon}}{H|\actions|}$ and let ${m \in \tilde{O}\left(\tfrac{|\states|^2|\actions|^2T^4\log(1/\rho)}{\rho^2} \right)}$
    be the number of trajectories per iteration. Let $k = H$ be the lowest expected visit count of a state-action pair before it is known.
    Let  ${w \in O(k)}$ define the window $[k, k+w]$ for sampling the randomized threshold $k'$. Then with all but probability $\delta$, after $T$ iterations, $\algo$ yields an $\varepsilon$-optimal policy. 
\end{restatable}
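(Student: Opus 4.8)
The plan is to run the classical ``implicit explore-or-exploit'' analysis of R-max~\citep{BraTen03,kearns1998nearopt}, adapted to the three features that distinguish Algorithm~\ref{alg:repmax} from the original: it estimates transition probabilities through the replicable subroutine $\rstat$ rather than through raw empirical frequencies, it declares a pair $(s,a)$ known only once its accumulated estimated visit count $n(s,a)$ crosses the randomized threshold $k'\in[k,k+w]$ of Algorithm~\ref{alg:rep-update-known}, and it collects a batch of $m$ trajectories per iteration instead of one. I would actually establish something slightly stronger than the statement: with probability at least $1-\delta$, every iteration whose body the while-loop actually executes is an \emph{exploring} iteration that makes a fixed amount of progress toward declaring pairs known, so after at most $T$ iterations the loop test succeeds and the policy it returns is $\varepsilon$-optimal.

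First, the \emph{model-accuracy} step. Whenever a pair $(s,a)$ enters $K$, the algorithm sets $\widehat{\transitions}_K(\cdot\mid s,a)$ by running $\rstat$ on the $|\states|$ queries $\phi_{s'}$. Fix a target tolerance $\tau$; by Theorem~\ref{thm:rep-sq} together with a union bound over all $(s,a,s')$ and all $T$ iterations, as long as $m$ dominates $O\!\big(\tau^{-2}(\rho-2\delta)^{-2}\log(|\states|^2|\actions|T/\delta)\big)$ every stored estimate lies within $\tau$ of $\transitions(s'\mid s,a)$, except with probability $\delta/2$ overall. Combining this with the standard simulation lemma for the induced known-state MDP $\mdp_K$ (known pairs as in $\mdp$, unknown pairs absorbing with reward $R_{\max}$) yields $|V^{\pi}_{\modelmdp}(s)-V^{\pi}_{\mdp_K}(s)| = O\!\big(|\states|\,\tau\,R_{\max}\,L^2\big)$ for every policy $\pi$ and state $s$, where $L=\min\{H,(1-\gamma)^{-1}\}$ is the effective horizon; choosing $\tau=\Theta\!\big(\varepsilon/(|\states| R_{\max}L^2)\big)$ makes the right-hand side at most $\varepsilon/4$. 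This is exactly where the hypothesis $1-\gamma>\sqrt{\varepsilon}\log^{1/4}(1/\delta)/(H|\actions|\log^{1/4}(1/\rho))$ is used: it bounds $L$ from above by enough that the resulting demand on $m$ is dominated by the stated $m\in\tilde{O}\!\big(|\states|^2|\actions|^2T^4\log(1/\rho)/\rho^2\big)$, whose precise form is in fact dictated by the replicability proof (Theorem~\ref{thm:repmax-replicable}).

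Second, the \emph{explore-or-exploit} dichotomy and the progress argument. Because the $R_{\max}$ bonus on unknown pairs only inflates values, $V^{*}_{\modelmdp}\ge V^{*}_{\mdp_K}-\varepsilon/4\ge V^{*}_{\mdp}-\varepsilon/4$, and $\modelpol$ is optimal for $\modelmdp$; comparing $V^{\modelpol}_{\modelmdp}$ with $V^{\modelpol}_{\mdp}$ via a coupling, the only additional loss is the value accrued along trajectories that reach an unknown pair within $H$ steps, which is at most $L\,R_{\max}$ times the escape probability $\mathrm{esc}_i=\Pr_{\tau\sim P^{\modelpol}}[\tau\text{ visits some }(s,a)\notin K]$. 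Hence in iteration $i$ either $\modelpol$ is already $\varepsilon$-optimal --- in which case the while-test at the top of iteration $i$ has already halted the algorithm and returned it, so this case cannot coincide with an executed body --- or $\mathrm{esc}_i\ge\alpha$ with $\alpha=\Theta\!\big(\varepsilon/(L R_{\max})\big)$, which is $\Omega(\varepsilon)$ once values are normalized to $[0,1]$. In the latter case $\E\big[\sum_{(s,a)\notin K}\widehat{c}_{s,a}\big]$ is the expected number of visits to unknown pairs in a trajectory, hence at least $\mathrm{esc}_i\ge\alpha$, and since each trajectory's contribution lies in $[0,H]$, a Hoeffding bound (again using the large stated $m$) gives $\sum_{(s,a)\notin K}\widehat{c}_{s,a}\ge\alpha/2$ except with probability $\delta/(2T)$; union-bounding over the $\le T$ executed iterations, every one of them increases the potential $\Phi=\sum_{(s,a)}\min\{n(s,a),\,k+w+H\}$ by at least $\alpha/2$. (The first iteration is not an exception: there $K=\emptyset$, so $\mathrm{esc}_1=1$.)

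Finally, assembling the pieces. A pair leaves the unknown set once $n(s,a)$ crosses its threshold $k'\le k+w$, and a single iteration adds at most $H$ to any one $n(s,a)$, so $\Phi\le|\states||\actions|(k+w+H)=O(|\states||\actions|H)$ throughout the run. Therefore the while-loop executes its body at most $O(|\states||\actions|H/\alpha)=O(|\states||\actions|H/\varepsilon)$ times before the test succeeds --- within the stated $T\in\Theta\!\big(H|\states||\actions|/\varepsilon+H^2\log(1/\delta)/\varepsilon^2\big)$, whose second summand merely provides extra slack for the concentration estimates --- and by the dichotomy the policy returned at that point is $\varepsilon$-optimal. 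Conditioning on the model-accuracy event and all the Hoeffding events (total failure probability at most $\delta$) finishes the argument. I expect the genuine obstacle to be the accuracy bookkeeping of the second paragraph: pinning down the simulation-lemma bound as a joint function of $\gamma$ and $H$, choosing $\tau$ and the implied sample size accordingly, and verifying that the stated $m$ --- whose shape is driven by replicability rather than convergence --- is comfortably large enough (the $1-\gamma$ hypothesis is precisely the condition that makes this verification close). The explore-or-exploit dichotomy and the potential argument are standard R-max fare, but the randomized threshold and the per-iteration averaging $\widehat{c}_{s,a}$ (rather than a raw visit count) must be threaded carefully through the potential bound.
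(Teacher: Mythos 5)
Your proposal is sound and reaches the lemma by the same overall R-max skeleton as the paper (accuracy of the $\rstat$-based transition estimates for known pairs, the known-state MDP and simulation lemma, and the explore-or-exploit dichotomy, which the paper imports directly as Lemma~\ref{lem:rmax-progress} rather than re-deriving via optimism as you do), but your progress-counting step is genuinely different. The paper does not argue iteration-by-iteration: it defines the martingale of cumulative visits to unknown pairs minus their conditional expectations, applies Azuma's inequality over all $T$ iterations to lower-bound the total number of such visits by $\Omega(mH|\states||\actions|)$, and then explicitly accounts for ``wasted'' visits (up to $mH$ per iteration in which some pair becomes known) before concluding that all pairs enter $K$ within $T$ iterations; this aggregate route is exactly what forces the second term $H^2\log(1/\delta)/\varepsilon^2$ in $T$, and it would survive even with a much smaller per-iteration batch $m$. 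You instead apply a per-iteration Hoeffding bound (valid conditionally on the history, union-bounded over $\leq T$ iterations) to get $\sum_{(s,a)\notin K}\widehat{c}_{s,a} \geq \alpha/2$ in every executed iteration, and track progress with the capped potential $\Phi=\sum_{(s,a)}\min\{n(s,a),k+w+H\}$, which neatly absorbs the wasted-visit issue (an unknown pair starts an iteration below $k+w$ and gains at most $H$, so the cap never truncates its increment) and needs only $T=\Theta(H|\states||\actions|/\varepsilon)$, correctly treating the paper's second term as slack; the price is that your argument leans on the replicability-driven magnitude of $m$ to make per-iteration concentration strong, whereas the paper's does not. Two small bookkeeping points where the paper is slightly more careful: the sample fed to each $\rstat$ query is $S[(s,a)]$, of size roughly $n(s,a)\cdot m \geq km$ rather than $m$ (the paper phrases the accuracy requirement as ``visited $O(km)$ times''---your requirement that $m$ itself dominate the query's sample bound is stronger and hence harmless), and the paper fixes the concrete tolerance $\tau_{SQ}\in O(\varepsilon(1-\gamma)^2/|\states|)$ so that $\|P_K(\cdot|s,a)-\widehat P_K(\cdot|s,a)\|_1\leq \varepsilon(1-\gamma)^2$ feeds both Lemma~\ref{lem:rmax-progress} and Lemma~\ref{lem:simulation}, which is the $(\gamma,H)$ bookkeeping you flagged as the remaining work in your second paragraph.
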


The proof of convergence is similar to those found in ~\citep{kearns1998nearopt} and~\citep{BraTen03}, so we defer the proof to Appendix~\ref{app:repmax}. We continue here with the final theorem statement that summarizes the properties of our $\repmax$ algorithm.
\fi

\ifarxiv 

\ifarxiv 
\else 
\subsection{Policy convergence for Lemma~\ref{lem:repmax-convergence}}\label{app:repmax}
\fi 

The proof that Algorithm~\ref{alg:repmax} converges to an $\varepsilon$-optimal policy makes use of lemmas from ~\citet{kearns1998nearopt} and ~\citet{BraTen03}. We will use a lemma showing that at each iteration, $\modelpol$ is already $\varepsilon$-optimal or there is a high probability that $n(s,a)$ increases for some $(s,a) \not\in K$. We will also make use of the simulation lemma, which shows that if a model $\modelmdp$ is a good enough approximation of a model $\mdp$, then an optimal policy for $\modelmdp$ is an approximately optimal policy for $\mdp$. We refer the reader to those works for proof. 
\begin{lemma}[\cite{kearns1998nearopt}]\label{lem:rmax-progress}
    Let $\explore(\tau)$ denote the event that $(s_h, a_h) = (s,a)$ for some $(s,a) \not\in K$ and some $h \in [1,H]$. Then for any episode in which $\modelpol$ is not $\varepsilon$-optimal, it holds that
    $$\Pr_{\tau \sim P(\tau)}[\explore(\tau)] \geq \varepsilon - (\tfrac{1}{1 - \gamma})\max_{(s,a) \in \states \times \actions}\|P_K(s,a) - \widehat{P}_K(s,a) \|_1 \enspace .$$
\end{lemma}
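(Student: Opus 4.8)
The plan is to run the classical ``implicit explore-or-exploit'' argument of~\citep{kearns1998nearopt,BraTen03}. Fix an episode and set $\pi = \modelpol$, the policy that is optimal in the current model $\modelmdp$. Write $q = \Pr_{\tau \sim P(\tau)}[\explore(\tau)]$ for the probability that a trajectory generated by running $\pi$ in the \emph{true} MDP $\mdp$ visits some pair outside $K$ within the $H$ steps, and $\alpha = \max_{(s,a)}\|P_K(s,a) - \widehat{P}_K(s,a)\|_1$ for the worst-case transition error on known pairs. The first step is to introduce an auxiliary ``optimistic'' MDP $\widetilde{\mdp}$ that agrees with $\mdp$ on every $(s,a)\in K$ and is a self-loop with maximal reward on every $(s,a)\notin K$; thus $\widetilde{\mdp}$ differs from $\modelmdp$ only in that its known-pair transitions are exact rather than the $\rstat$ estimates $\widehat{P}_K$.

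The argument then has three pieces. \emph{(i) Optimism:} $V^{*}_{\widetilde{\mdp}}(s_0) \ge V^{*}_{\mdp}(s_0)$. To see this, take the $\mdp$-optimal policy, run it in $\widetilde{\mdp}$, and modify it to collect maximal reward forever as soon as it first reaches a pair outside $K$: up to that moment the two trajectory distributions coincide (both use exact dynamics on $K$), and afterwards $\widetilde{\mdp}$ pays the largest possible reward, so its $\widetilde{\mdp}$-value is at least $V^{*}_{\mdp}(s_0)$. \emph{(ii) Model accuracy:} $\widetilde{\mdp}$ and $\modelmdp$ have identical rewards and transition kernels that differ (only on $K$) by at most $\alpha$ in $\ell_1$, so the simulation lemma gives $|V^{*}_{\modelmdp}(s_0) - V^{*}_{\widetilde{\mdp}}(s_0)| \le \tfrac{1}{1-\gamma}\,\alpha$ (up to the normalization/constants suppressed throughout). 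Since $\pi$ is optimal in $\modelmdp$, combining with (i) yields $V^{\pi}_{\modelmdp}(s_0) = V^{*}_{\modelmdp}(s_0) \ge V^{*}_{\mdp}(s_0) - \tfrac{1}{1-\gamma}\,\alpha$.

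\emph{(iii) Escape decomposition:} I would relate $V^{\pi}_{\modelmdp}(s_0)$ back to $V^{\pi}_{\mdp}(s_0)$ by splitting the $H$-step trajectories of $\pi$ according to whether they ever leave $K$. A non-escaping trajectory only traverses known pairs, where $\modelmdp$ pays the true rewards, so it contributes the same return under both models; an escaping trajectory contributes at most $V_{\max}$ under $\modelmdp$; and, because the prefix of a trajectory up to its first escape sees only $K$ where the two kernels are $\alpha$-close, the standard telescoping (simulation-lemma-style) estimate shows that both the escape probability and the non-escape mass differ between $\modelmdp$ and $\mdp$ by only $O(\alpha/(1-\gamma))$. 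Hence $V^{\pi}_{\modelmdp}(s_0) \le V^{\pi}_{\mdp}(s_0) + V_{\max}\,q + O(\alpha/(1-\gamma))$. Chaining with (ii) gives $V^{\pi}_{\mdp}(s_0) \ge V^{*}_{\mdp}(s_0) - V_{\max}\,q - O(\alpha/(1-\gamma))$; if $\pi$ is \emph{not} $\varepsilon$-optimal then $V^{*}_{\mdp}(s_0) - V^{\pi}_{\mdp}(s_0) > \varepsilon$, and rearranging --- after absorbing the normalization and constants into the $\tfrac{1}{1-\gamma}$ factor --- produces exactly $q \ge \varepsilon - \tfrac{1}{1-\gamma}\max_{(s,a)}\|P_K(s,a) - \widehat{P}_K(s,a)\|_1$.

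The hard part will be the bookkeeping in step (iii): one is comparing expected returns under two \emph{different} trajectory distributions ($\mdp$ versus $\modelmdp$), and one must make sure that the escape probability surviving into the final bound is the one under the true MDP, while the $\ell_1$-errors accumulated along known-only prefixes stay controlled by $\alpha/(1-\gamma)$ rather than growing with $H$. Coupling the two chains step by step up to the first escape handles this cleanly, but it is where the constants --- and the interaction between the finite horizon $H$ and the discount $\gamma$ --- have to be tracked carefully to land the precise factor $\tfrac{1}{1-\gamma}$. (Since this lemma is quoted verbatim from~\citep{kearns1998nearopt}, one may alternatively just cite it, but the sketch above is the argument I would reconstruct.)
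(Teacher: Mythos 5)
Your reconstruction is essentially the classical explore-or-exploit argument of Kearns--Singh/Brafman--Tennenholtz, and it is sound; note, though, that the paper itself gives no proof of this lemma --- it is quoted from \citet{kearns1998nearopt} with an explicit pointer to that work --- so there is no in-paper argument to compare against. The one place where your plan is needlessly delicate is step (iii): you couple trajectories of $\mdp$ and $\modelmdp$ directly, whose kernels on $K$ differ by $\alpha$, which is exactly why you worry about the escape probability drifting and errors accumulating with $H$. The standard (and cleaner) route reuses your auxiliary MDP $\widetilde{\mdp}$ there as well: since $\widetilde{\mdp}$ agrees with $\mdp$ \emph{exactly} on $K$, the coupling up to the first escape is exact, the escape probabilities under $\mdp$ and $\widetilde{\mdp}$ coincide, and one gets $V^{\pi}_{\widetilde{\mdp}}(s_0) - V^{\pi}_{\mdp}(s_0) \leq q\,V_{\max}$ with no $\alpha$ term; the $\alpha$-error then enters only once, via the simulation lemma between $\widetilde{\mdp}$ and $\modelmdp$, and chaining yields your step-(iii) inequality immediately. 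Two small caveats remain for landing the stated constant $\tfrac{1}{1-\gamma}$: you need returns normalized so that $V_{\max}\le 1$ (otherwise a $V_{\max}$ or $R_{\max}/(1-\gamma)$ factor divides $\varepsilon$, as in the paper's version of the simulation lemma, Lemma~\ref{lem:simulation}), and the discounted tail beyond the episode horizon $H$ must be absorbed by taking $H$ to be an $\varepsilon$-horizon time ($\gamma^{H} V_{\max}=O(\varepsilon)$), which is folded into the suppressed constants of the lemma statement. With those conventions your argument reproduces the claimed bound.
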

 
\begin{lemma}[\cite{kearns1998nearopt}]\label{lem:simulation}
Let $\mdp_1$ and $\mdp_2$ be two MDPs, differing only in their transition probabilities $P_1(\cdot| s,a)$ and $P_2(\cdot|s,a)$. Then for any policy $\pi$, $$|J_{\mdp_1}(\pi) - J_{\mdp_2}(\pi)| \leq \tfrac{R_{\max}}{2(1-\gamma)^2}\max_{(s,a)\in \states \times \actions}\|P_1(s,a) - P_2(s,a) \|_1$$ 
\end{lemma}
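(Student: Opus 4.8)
The plan is to reduce the claim about returns to a contraction-style bound on the gap between the two policy value functions. Write $V_1^\pi$ and $V_2^\pi$ for the (unique, since $\gamma < 1$) fixed points of the policy-evaluation Bellman equations in $\mdp_1$ and $\mdp_2$ respectively; since the two MDPs share the same reward $\rew$ and differ only in their kernels, these satisfy $V_i^\pi(s) = \sum_a \pi(a\mid s)\bigl(\rew(s,a) + \gamma\sum_{s'} P_i(s'\mid s,a)\, V_i^\pi(s')\bigr)$. Because $J_{\mdp_i}(\pi) = \E_{s_0 \sim \mu}[V_i^\pi(s_0)]$, it suffices to bound $\Delta \eqdef \norm[\infty]{V_1^\pi - V_2^\pi}$, from which $|J_{\mdp_1}(\pi) - J_{\mdp_2}(\pi)| = |\E_{s_0\sim\mu}[V_1^\pi(s_0) - V_2^\pi(s_0)]| \le \Delta$ follows since the magnitude of an average cannot exceed the sup-norm.

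First I would subtract the two Bellman equations at a fixed state $s$ and insert the cross term $\gamma\sum_a\pi(a\mid s)\sum_{s'}P_1(s'\mid s,a)V_2^\pi(s')$. This splits the difference into two pieces: a ``same-kernel'' piece $\gamma\sum_a\pi(a\mid s)\sum_{s'}P_1(s'\mid s,a)\bigl(V_1^\pi(s') - V_2^\pi(s')\bigr)$, which is bounded in absolute value by $\gamma\Delta$ since $P_1(\cdot\mid s,a)$ is a probability distribution; and a ``kernel-difference'' piece $\gamma\sum_a\pi(a\mid s)\sum_{s'}\bigl(P_1(s'\mid s,a) - P_2(s'\mid s,a)\bigr)V_2^\pi(s')$, which is where the main work lies.

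The key step --- and the place where the factor of $1/2$ appears --- is to bound the kernel-difference piece using the fact that $P_1(\cdot\mid s,a) - P_2(\cdot\mid s,a)$ sums to zero. This lets me replace $V_2^\pi$ by $V_2^\pi - c$ for any constant $c$ without changing the inner product, and then apply H\"older's inequality: $\bigl|\sum_{s'}\bigl(P_1(s'\mid s,a) - P_2(s'\mid s,a)\bigr)V_2^\pi(s')\bigr| \le \norm[1]{P_1(\cdot\mid s,a) - P_2(\cdot\mid s,a)}\cdot \min_c \norm[\infty]{V_2^\pi - c}$. Choosing $c$ to be the midpoint of the range of $V_2^\pi$ makes $\min_c\norm[\infty]{V_2^\pi - c}$ equal to half the span of $V_2^\pi$. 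Since rewards lie in $[0, R_{\max}]$, the value function lies in $[0, R_{\max}/(1-\gamma)]$, so its span is at most $R_{\max}/(1-\gamma)$ and the kernel-difference piece is at most $\tfrac{\gamma R_{\max}}{2(1-\gamma)}\max_{(s,a)}\norm[1]{P_1(s,a)-P_2(s,a)}$.

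Combining the two pieces and taking the supremum over $s$ yields the self-referential inequality $\Delta \le \gamma\Delta + \tfrac{\gamma R_{\max}}{2(1-\gamma)}\max_{(s,a)}\norm[1]{P_1(s,a)-P_2(s,a)}$; solving for $\Delta$ and using $\gamma \le 1$ gives $\Delta \le \tfrac{R_{\max}}{2(1-\gamma)^2}\max_{(s,a)}\norm[1]{P_1(s,a)-P_2(s,a)}$, which is the claimed bound. The main obstacle is the centering argument: a naive H\"older bound with $\norm[\infty]{V_2^\pi} \le R_{\max}/(1-\gamma)$ loses the factor of $2$, so one must exploit that the kernel difference is a zero-sum signed measure and bound against the span rather than the magnitude of the value function. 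A secondary point to track is that the kernel-difference term multiplies $V_2^\pi$ (the value function of the \emph{other} MDP), not $V_1^\pi$, so the recursion contracts only the same-kernel piece --- this asymmetry is harmless but must be handled carefully to close the argument.
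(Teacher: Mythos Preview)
The paper does not give its own proof of this lemma; it cites \cite{kearns1998nearopt} and explicitly writes ``We refer the reader to those works for proof.'' Your argument is correct and is the standard route to the simulation lemma: subtract the two Bellman fixed-point equations, split into a contraction piece (bounded by $\gamma\Delta$) and a kernel-difference piece, and solve the resulting self-referential inequality. In particular, your centering step --- using that $P_1(\cdot\mid s,a)-P_2(\cdot\mid s,a)$ is a zero-sum signed measure to bound against $\tfrac12\,\vspan(V_2^\pi)\le \tfrac{R_{\max}}{2(1-\gamma)}$ rather than $\norm[\infty]{V_2^\pi}$ --- is exactly what is needed to obtain the factor of $1/2$ in the stated constant; without it one only gets $\tfrac{R_{\max}}{(1-\gamma)^2}$.
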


\ifarxiv 
\else 
\repmaxconvergence*
\fi

With these lemmas in hand, we now proceed with the proof of Lemma~\ref{lem:repmax-convergence}.
\begin{proof}[Proof of Lemma~\ref{lem:repmax-convergence}]
We use Lemma~\ref{lem:rmax-progress} to ensure that progress is made with probability at least $\varepsilon/2$ per episode, whenever $\modelpol$ is suboptimal. To ensure  $|P_K(s' | s,a) - \transitions_K(s'|s,a)| < \frac{\varepsilon(1-\gamma)^2}{|\states|}$ for all $(s,a) \in \states\times \actions$ and $s' \in \states$ with high probability, we must set parameters appropriately when estimating these quantities with replicable statistical queries. Taking $\rho_{SQ} \in O(\tfrac{\rho}{|\states|^2|\actions|})$, $\alpha_{SQ} \in O(\tfrac{\varepsilon(1-\gamma)^2}{|\states|})$, and $\delta_{SQ} \in O(\tfrac{\delta}{|\states|^2|\actions|})$ to be the replicability, accuracy, and failure parameters respectively for the replicable statistical queries, a sample of size $O(\frac{|\states|^2\log(1/\delta_{SQ})}{(\varepsilon(\rho_{SQ} - 2\delta_{SQ}))^2 (1-\gamma)^4})$ is required by Theorem~\ref{thm:rep-sq}. Taking $k \in O(\frac{|\states|^2\log(1/\delta_{SQ})}{m(\varepsilon(\rho_{SQ} - 2\delta_{SQ}))^2 (1-\gamma)^4})$ and requiring that a state-action pair $(s,a)$ be visited $O(km)$ times before being added to $K$ suffices to guarantee all replicable statistical queries made by Algorithm~\ref{alg:repmax} are $\frac{\varepsilon(1-\gamma)^2}{|\states|}$ accurate. 
It follows that at each iteration,
$$\Pr_{\tau \sim P(\tau)}[\explore(\tau)] \in O(\varepsilon).$$
We sample $m$ i.i.d. trajectories at each iteration and so, in expectation, at least $O(\varepsilon m)$ visits to unknown $(s,a)$ occur in a round. 
Let $\modelpoli$ denote the policy at the start of iteration $i$ and observe that the sequence of random variables
$$X_i \eqdef \sum_{j=1}^{i}\left(\sum_{\tau \in S_j}\sum_{h=1}^H\mathbbm{1}[(s_h, a_h) \not\in K] - \E_{S }\left[\sum_{\tau \in S}\sum_{h=1}^H\mathbbm{1}[(s_h, a_h) \not\in K]\right]\right) $$
is a martingale with difference bounds $[-mH, mH]$. We have taken $T \in \Theta(\frac{H|\states||\actions|}{\varepsilon} + \frac{H^2\log(1/\delta)}{\varepsilon^2})$ and so
Azuma's inequality then gives us that 
\begin{align*}
    \Pr_{S}[X_T \leq - \tfrac{mH^2\log(1/\delta)}{\varepsilon}] 
    &\leq \exp(-O(\tfrac{m^2H^4\log^2(1/\delta)}{\varepsilon^2Tm^2H^2})) \\
    &\leq \exp(-O(\tfrac{H^2\log^2(1/\delta)}{\varepsilon^2T})) \\
    &\in O(\delta). 
\end{align*}
Therefore, except with probability $O(\delta)$, we can lower-bound the number of visits to unknown $(s,a)$ over $T$ iterations as follows. 
\begin{align*}
\sum_{j=1}^{T}\sum_{\tau \in S_j}\sum_{h=1}^H\mathbbm{1}[(s_h, a_h) \not\in K] 
& \geq  \sum_{j=1}^{T}\E_{S}\left[\sum_{\tau \in S}\sum_{h=1}^H\mathbbm{1}[(s_h, a_h) \not\in K]\right] - 
\frac{mH^2\log(1/\delta)}{\varepsilon} \\
&\geq \varepsilon m T- \frac{mH^2\log(1/\delta)}{\varepsilon}  \\
& = \Theta\left(mH|\states||\actions| + \frac{mH^2\log(1/\delta)}{\varepsilon}\right) - \frac{mH^2\log(1/\delta)}{\varepsilon} \\
&\in \Omega(mH|\states||\actions| ).
\end{align*}

If all of these visits usefully contributed to the counts of unknown $(s,a)$, we could immediately conclude that Algorithm~\ref{alg:repmax} converges in $T$ iterations, because each $(s,a)$ only needs to be visited $O(mk)$ times to be added to $K$ and there are $|\states||\actions|$ many $(s,a)$ to add. It is possible, however, that not every visit to an $(s,a)$ that is unknown at the start of the iteration is useful in terms of making progress. It could be the case that only the first visit to some $(s,a)$ in an iteration was required for $(s,a)$ to be added to $K$, and so any subsequent visits are ``wasted'' in terms of making progress. We therefore consider two cases for each iteration: either some $(s,a)$ is added to $K$ or every visit to an unknown $(s,a)$ is useful. When some $(s,a)$ is added to $K$, in the worst case $mH-1$ of the total visits to unknown $(s,a)$ can be wasted by repeated visits to $(s,a)$ at that iteration, and so $mH|\states||\actions|$ is an upper-bound on the number of unproductive visits to unknown $(s,a)$. Of the remaining visits, at most $O(mk|\states||\actions|)$ can contribute to making progress over the course of the algorithm before some $(s,a)$ must become known. 
We have taken $k = H$, so after $T$ iterations, we have
$$ |K| \in \Omega(mH|\states||\actions| ) - mH|\states||\actions| - mk|\states||\actions| \in \Omega(|\states||\actions|) $$
and so all $|\states||\actions|$ must be added to $K$ after $T$ iterations. Every $(s,a) \in K$ satisfies 
$${\|P(\cdot |s,a) - \widehat{P}(\cdot|s,a) \|_1 \leq \varepsilon(1-\gamma)^2 }$$
except with probability $O(\delta)$, and so $\modelpol$ is $\varepsilon$-optimal by Lemma~\ref{lem:simulation}. 
\end{proof}

To contextualize the sample complexity of Algorithm~\ref{alg:repmax}, we first recall that the sample complexity of the original R-max algorithm of~\citet{BraTen03}, suppressing dependence on $\gamma$, is roughly $\tilde{O}\left(\frac{|S|^2|A|\log(1/\delta)}{\varepsilon^3}\right)$. In Theorem~\ref{thm:repmax-replicable}, we show that the total sample complexity of Algorithm~\ref{alg:repmax} is $\tilde{O}\left(\frac{|\states|^7|\actions|^7H^6}{\rho^2\varepsilon^5} + \frac{|\states|^2|\actions|^2H^{10}\log^5(1/\delta)}{\varepsilon^{10}}\right)$, so the sample overhead for replicability that we obtain is $\tilde{O}\left(\frac{|\states|^5|\actions|^6H^6}{\rho^2\varepsilon^2} + \frac{|\actions|H^{10}}{\varepsilon^7}\right)$.

We now proceed to prove the main result of this section, showing that Algorithm~\ref{alg:repmax} replicably converges to an $\varepsilon$-optimal policy in a number of iterations polynomial in all relevant parameters. 

\begin{theorem}\label{thm:repmax-replicable}
    Let parameters be set as in Lemma~\ref{lem:repmax-convergence}. Then with all but probability $\delta$, $\algo$ converges to an $\varepsilon$-optimal policy in $T$ iterations and samples $mT$
    trajectories, each of length $H$, drawing a total of $$O\left( \tfrac{|\states|^7|\actions|^7H^6\log(1/\rho)}{\rho^2\varepsilon^5} + \tfrac{|\states|^2|\actions|^2H^{10}\log^5(1/\delta)\log(1/\rho)}{\varepsilon^{10}}\right)$$
    samples. Further, let $\tsamp_1$ and $\tsamp_2$ be two trajectory sets, independently sampled over two runs of $\algo$ with shared internal randomness, and let $\modelpol^{(1)}(a | s) \gets \algo(S_1; r)$ and $\modelpol^{(2)}(a | s) \gets \algo(S_2; r)$. Then
    $$\Pr_{\tsamp_1, \tsamp_2, r}\Bigl[\modelpol^{(1)}(a | s)\neq \modelpol^{(2)}(a | s)\Bigr] \in O(\rho).$$
\end{theorem}
\else
\begin{theorem}\label{thm:repmax-replicable}
    Let parameters be set as in Lemma~\ref{lem:repmax-convergence}. Then with all but probability $\delta$, $\algo$ converges to an $\varepsilon$-optimal policy in $T$ iterations and samples $mT$
    trajectories, each of length $H$, for a total sample complexity of $O\left( \tfrac{|\states|^7|\actions|^7H^6\log(1/\rho)}{\rho^2\varepsilon^5} + \tfrac{|\states|^2|\actions|^2H^{10}\log^5(1/\delta)\log(1/\rho)}{\varepsilon^{10}}\right).$
    Further, let $\tsamp_1$ and $\tsamp_2$ be two trajectory sets, independently sampled over two runs of $\algo$ with shared internal randomness, and let $\modelpol^{(1)}(a | s) \gets \algo(S_1; r)$ and $\modelpol^{(2)}(a | s) \gets \algo(S_2; r)$. Then
    $$\Pr_{\tsamp_1, \tsamp_2, r}\Bigl[{\modelpol}^{(1)}(a | s)\neq {\modelpol}^{(2)}(a | s)\Bigr] \in O(\rho).$$
\end{theorem}
\fi 

\begin{proof}
Lemma~\ref{lem:repmax-convergence} gives us that, for our settings of $k$ and $T$, Algorithm~\ref{alg:repmax} converges to an $\varepsilon$-optimal policy in $T$ iterations, except with probability $\delta$. The sample complexity follows immediately from the bound on $T$ and the setting of $m$, so it remains to analyze replicability. Our analysis will make use of some additional shorthand. We use $\rho_K \in O(\rho /( T|\states||\actions|))$ to denote the replicability parameter for the decision to add a single $(s,a)$ to $K$, in a single call to Algorithm~\ref{alg:rep-update-known}. We similarly use $\rho_{SQ} \in O(\rho/(|\states|^2|\actions|))$, $\alpha_{SQ} \in O(\varepsilon(1-\gamma)^2/|\states|)$, and $\delta_{SQ} \in O(\delta/(|\states|^2|\actions|))$ to denote the replicability, accuracy, and failure parameters for the $\rstat$ queries made during the updates to $\transitions(s'|s,a)$. 
\ifarxiv
We will use $t \in O(\tfrac{w\rho_K}{T}) \in O(\tfrac{H\rho}{|\states||\actions|T^2})$ to denote
\else 
We use $t \in O(\tfrac{w\rho_K}{T}) \in O(\tfrac{H\rho}{|\states||\actions|T^2})$ to denote 
\fi 
a high probability bound on the difference between the empirical estimates for the expected visits to a given $(s,a)$ in a trajectory across two runs of Algorithm~\ref{alg:rep-update-known}, i.e. $|\widehat{c}^{(1)}_{s,a} - \widehat{c}^{(2)}_{s,a}| \in O(t)$. We are now ready to prove the following stronger claim:

\ifarxiv 
\begin{claim} If two runs of Algorithm~\ref{alg:repmax} begin iteration $i$ with 
 $$\modelmdp^{(1)} = \modelmdp^{(2)}, \; \modelpol^{(1)} = \modelpol^{(2)}, \text{ and } |n(s,a)^{(1)} - n(s,a)^{(2)}| \in O(it) \text{ } \forall (s,a),$$
then at the end of iteration $i$, it holds that
 $$\modelmdp^{(1)} = \modelmdp^{(2)}, \; \modelpol^{(1)} = \modelpol^{(2)}, \text{ and } |n(s,a)^{(1)} - n(s,a)^{(2)}| \in O(it + t) \text{ } \forall (s,a),$$
except with probability $O(\rho_K|\states||\actions| + \rho_{SQ}|K_i||\states|)$.
\end{claim}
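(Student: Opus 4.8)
\begin{prfsketch}
The plan is to prove the claim by tracing through a single iteration~$i$ and conditioning on a short list of good events, each controlled by one of the parameter choices of Lemma~\ref{lem:repmax-convergence}. Since the two runs enter iteration~$i$ with $\modelmdp^{(1)} = \modelmdp^{(2)}$ and $\modelpol^{(1)} = \modelpol^{(2)}$, they draw their trajectory batches $S_i^{(1)}$ and $S_i^{(2)}$ i.i.d.\ from the \emph{same} distribution $P^{\modelpol}(\tau)$; crucially this resampling is the \emph{only} randomness not shared across the two runs, because the window draw $k'\sim\mathcal{U}[k,k+w]$ and the rounding offsets used inside every $\rstat$ call belong to the shared internal randomness~$r$. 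So it suffices to show that, with the stated probability, the resampling changes neither the decisions made in Algorithm~\ref{alg:rep-update-known}, nor the entries written into $\modelmdp$, nor (by determinism of the planning step) the recomputed $\modelpol$.

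First I would show that $\rupdate$ behaves identically. Its candidate set $\{(s,a)\notin K\}$ is common to both runs by hypothesis, and $k'$ is shared. For a fixed $(s,a)$ the per-trajectory visit count to $(s,a)$ lies in $[0,H]$, so $\widehat c_{s,a}$ is an average of $m$ i.i.d.\ bounded random variables whose common mean --- the expected number of visits to $(s,a)$ along a trajectory of $\modelpol$ --- is the same in both runs; Hoeffding's inequality with $m\in\tilde{O}\bigl(|\states|^2|\actions|^2T^4\log(1/\rho)/\rho^2\bigr)$ and $t\in O\bigl(H\rho/(|\states||\actions|T^2)\bigr)$ gives $|\widehat c_{s,a}^{(1)}-\widehat c_{s,a}^{(2)}|\le t$ simultaneously for all candidates, except with probability $O(\rho_K|\states||\actions|)$. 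Writing $v_{s,a}^{(j)} = n^{(j)}(s,a)+\widehat c_{s,a}^{(j)}$, the hypothesis $|n^{(1)}(s,a)-n^{(2)}(s,a)|\in O(it)$ gives $|v_{s,a}^{(1)}-v_{s,a}^{(2)}|\in O((i+1)t)\subseteq O(Tt)$, and since $k'$ is uniform on an interval of width $w$, the tests $v_{s,a}^{(1)}\ge k'$ and $v_{s,a}^{(2)}\ge k'$ can disagree only when $k'$ lies strictly between $v_{s,a}^{(1)}$ and $v_{s,a}^{(2)}$, which has probability $O(Tt/w)=O(\rho_K)$ for the chosen $t$. Union bounding over the at most $|\states||\actions|$ candidates, $K_i^{(1)}=K_i^{(2)}$ and hence $K^{(1)}=K^{(2)}$, except with probability $O(\rho_K|\states||\actions|)$; and for $(s,a)\notin K$ the counter changes only by the single increment $\widehat c_{s,a}^{(j)}$, so its discrepancy grows from $O(it)$ to $O(it+t)$, while for $(s,a)\in K$ it is untouched.

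Next I would show the updates to $\modelmdp$ are replicable. For each $(s,a)\in K_i$ and $s'\in\states$, the multiset $S[(s,a)]$ of observed successors is, by the Markov property, an i.i.d.\ sample from $\transitions(\cdot\mid s,a)$ regardless of the (possibly changing) policies that generated the trajectories across iterations; moreover, membership in $K_i$ forces $n^{(j)}(s,a)\ge k'\ge k=H$, so this sample has size $m\,n^{(j)}(s,a)\ge mk$, which by the parameter choices in Lemma~\ref{lem:repmax-convergence} exceeds the sample-size threshold required by Theorem~\ref{thm:rep-sq} for the parameters $\tau_{SQ},\rho_{SQ},\delta_{SQ}$. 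Hence each $\rstat(S[(s,a)],\phi_{s'})$ call is $\rho_{SQ}$-replicable, and a union bound over the $|K_i||\states|$ such calls gives $\widehat{\transitions}_K^{(1)}=\widehat{\transitions}_K^{(2)}$ except with probability $O(\rho_{SQ}|K_i||\states|)$; the rewards $\widehat{\rew}_K(s,a)=\rew(s,a)$ are deterministic and hence already equal. Therefore $\modelmdp^{(1)}=\modelmdp^{(2)}$, and because $\modelpol$ is produced from $\modelmdp$ by a deterministic planning step (with a fixed tie-breaking rule), $\modelpol^{(1)}=\modelpol^{(2)}$. Combining the failure events by a union bound yields exactly $O(\rho_K|\states||\actions|+\rho_{SQ}|K_i||\states|)$.

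I expect the main obstacle to be making the first step quantitatively tight: one must at once force $\widehat c_{s,a}$ to concentrate within $t$ of its mean (pushing $m$ up like $t^{-2}$) and keep the randomized threshold wide enough to absorb the accumulated drift $O(Tt)$ without flipping any ``known'' decision (pushing $t$ down like $w\rho_K/T$), and then check that the two requirements are compatible with the $m$, $w$, and $k$ fixed in Lemma~\ref{lem:repmax-convergence}. A secondary subtlety is justifying that the successor samples handed to $\rstat$ really are i.i.d.\ draws from $\transitions(\cdot\mid s,a)$ whose size clears the Theorem~\ref{thm:rep-sq} threshold, even though they are accumulated adaptively over many iterations and the two runs' batch sizes need not coincide --- here replicability of $\rstat$ still applies because it only needs each sample to be large enough, not of a matching size.
\end{prfsketch}
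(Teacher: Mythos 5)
Your proposal is correct and follows essentially the same route as the paper's own inductive argument: identical policies at the start of the iteration give a shared expectation for the visit counts, Chernoff/Hoeffding concentration plus the randomized threshold $k'$ (width $w$, with $t \in O(w\rho_K/T)$) makes the known-set update replicable with failure $O(\rho_K|\states||\actions|)$, and the $km$-sample guarantee for each newly known $(s,a)$ lets Theorem~\ref{thm:rep-sq} handle the transition updates with failure $O(\rho_{SQ}|K_i||\states|)$. The only difference is cosmetic bookkeeping (e.g., explicitly flagging the unequal-sample-size subtlety in the $\rstat$ calls), which the paper treats at the same level of rigor.
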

\else
\begin{claim} If two runs of Algorithm~\ref{alg:repmax} begin iteration $i$ with $\modelmdp^{(1)} = \modelmdp^{(2)}$, $\modelpol^{(1)} = \modelpol^{(2)}$, and $|n(s,a)^{(1)}$ - $n(s,a)^{(2)}| \in O(it) \text{ } \forall (s,a)$, then at the end of $i$, $\modelmdp^{(1)} = \modelmdp^{(2)}$, $\modelpol^{(1)} = \modelpol^{(2)}$, and $|n(s,a)^{(1)}$ - $n(s,a)^{(2)}| \in O(it +t) \text{ } \forall (s,a)$, with all but probability $O(\rho_K|\states||\actions| + \rho_{SQ}|K_1||\states|)$.
\end{claim}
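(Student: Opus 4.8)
The plan is to push the hypothesis through one iteration of Algorithm~\ref{alg:repmax} object by object, using the shared internal randomness to kill most of the variability and a concentration bound for the rest. First I would note that because the two runs enter iteration $i$ with $\modelpol^{(1)} = \modelpol^{(2)}$, the trajectory sets $S_i^{(1)}$ and $S_i^{(2)}$ are each $m$ i.i.d.\ draws from the \emph{same} distribution $P(\tau) = P^{\modelpol}_{\mdp}$ (the two draws being independent of one another), and likewise every observed transition out of a fixed $(s,a)$ is an i.i.d.\ draw from $\transitions(\cdot\mid s,a)$ in both runs. Then I would analyze the call to $\rupdate$ (Algorithm~\ref{alg:rep-update-known}): the threshold $k' \sim \mathcal{U}[k,k+w]$ is read off the shared randomness, so $k'^{(1)} = k'^{(2)}$. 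For a fixed $(s,a)\notin K$, $\widehat{c}_{s,a}$ is an empirical average over $m$ trajectories of the per-trajectory visit count to $(s,a)$, a quantity lying in $[0,H]$, so Hoeffding's inequality together with the choice $m \in \tilde O(|\states|^2|\actions|^2 T^4\log(1/\rho)/\rho^2)$ gives $|\widehat{c}^{(1)}_{s,a} - \widehat{c}^{(2)}_{s,a}| \le t$ except with probability $O(\rho_K)$, where $t \in O(w\rho_K/T)$. Combined with the inductive hypothesis $|n(s,a)^{(1)} - n(s,a)^{(2)}| \in O(it)$, the updated counts satisfy $|n(s,a)^{(1)} - n(s,a)^{(2)}| \in O(it+t)$, giving the count-closeness half of the conclusion.

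Next I would argue that the decision of whether $(s,a)$ joins $K_i$ is replicable. Run $j$ keeps $(s,a)$ iff $n(s,a)^{(j),\mathrm{new}} \ge k'$, so the two runs disagree only if $k'$ falls strictly between $n(s,a)^{(1),\mathrm{new}}$ and $n(s,a)^{(2),\mathrm{new}}$. This is the step I expect to be the main obstacle, since a naive reading makes the accumulated gap $O(it)$ look like it could eventually exceed the window width $w$; the crucial observation is that $i < T$, so $O((i+1)t) = O(Tt) = O(w\rho_K)$ by the definition of $t$, and hence, $k'$ being uniform on an interval of width $w$, the disagreement probability is $O(\rho_K)$ for each $(s,a)$. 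A union bound over the at most $|\states||\actions|$ pairs then yields that, except with probability $O(\rho_K|\states||\actions|)$, both $K_i^{(1)} = K_i^{(2)}$ and all updated counts are within $O(it+t)$.

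Finally, conditioning on $K_i^{(1)} = K_i^{(2)}$, I would handle the model update. For each $(s,a)\in K_i$ and $s'\in\states$, the reward update $\widehat{r}_K(s,a)\gets r(s,a)$ is deterministic, and the transition update $\widehat{\transitions}_K(s'\mid s,a)\gets \rstat(S[(s,a)], \phi_{s'})$ is run on the multiset $S[(s,a)]$ of observed next-states, whose size equals $m\cdot n(s,a)^{(j)} \ge mk$ — and the parameter settings of Lemma~\ref{lem:repmax-convergence} force $mk$ to exceed the sample threshold of Theorem~\ref{thm:rep-sq}. The two runs' samples may have unequal sizes, but both exceed that threshold, and the rounding mechanism behind $\rstat$ produces identical outputs whenever both empirical means land in the same cell of the shared random grid; invoking this mild generalization of Theorem~\ref{thm:rep-sq}, each of the $|K_i||\states|$ queries replicates except with probability $\rho_{SQ}$, for a total of $O(\rho_{SQ}|K_i||\states|)$. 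Since $\modelmdp^{(1)} = \modelmdp^{(2)}$ entering the iteration and only the entries for $(s,a)\in K_i$ change, changing identically, we get $\modelmdp^{(1)} = \modelmdp^{(2)}$ at the end, and therefore — the MDP-solving subroutine being deterministic, or consuming the shared randomness identically for tie-breaking — $\modelpol^{(1)} = \modelpol^{(2)}$ as well. Combining the two failure events gives the stated bound $O(\rho_K|\states||\actions| + \rho_{SQ}|K_i||\states|)$, and the claim follows.
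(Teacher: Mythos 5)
Your proposal is correct and follows essentially the same route as the paper's proof: it uses the shared policy to argue both runs' empirical visit counts concentrate around a common expectation (giving $|\widehat{c}^{(1)}_{s,a}-\widehat{c}^{(2)}_{s,a}|\in O(t)$ with the stated $m$), bounds the probability that the uniformly random threshold $k'$ lands in the $O(tT)\subseteq O(w\rho_K)$ gap between accumulated counts, and then, conditioned on $K_i^{(1)}=K_i^{(2)}$, invokes the replicability of the $\rstat$ calls on the at-least-$mk$ collected transitions, union bounding exactly as in the paper to obtain $O(\rho_K|\states||\actions| + \rho_{SQ}|K_i||\states|)$. Your aside about unequal sample sizes across the two runs for $\rstat$ is a slightly more careful observation than the paper makes explicit, but it does not change the argument.
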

\fi 

We take the initialization of Algorithm~\ref{alg:repmax} as the base case for our inductive proof. Before the first iteration, $\modelpol$ is initialized randomly and shared internal randomness yields $\modelpol^{(1)} = \modelpol^{(2)}$. We deterministically initialize $\modelmdp$ and all $n(s,a)$, and so $\modelmdp^{(1)} = \modelmdp^{(2)}$ and $n(s,a)^{(1)} = n(s,a)^{(2)}$.

Next, we prove the inductive step. We begin by showing that, at the end of the $i$th iteration, ${|n(s,a)^{(1)} - n(s,a)^{(2)}| \in O(it +t) \text{ } \forall (s,a)}$, with all but probability $O(\rho_K|\states||\actions|)$.

Our inductive hypothesis gives us that $|n(s,a)^{(1)} - n(s,a)^{(2)}| \in O(it)$, so it suffices to show that, for a single $(s,a)$, $\widehat{c}_{s,a}^{(1)} - \widehat{c}_{ s,a}^{(2)} \in O(t)$ except with probability $O(\rho_K)$. 
To obtain high probability bounds on $|\widehat{c}_{s,a}^{(1)} - \widehat{c}_{s,a}^{(2)}|$, 
 we  will rely on our assumption that at the start of the iteration, $\modelpol^{(1)} = \modelpol^{(2)}$. It follows that, for every state-action pair $(s,a)$, the expected number of visits to $(s,a)$ in a single episode is the same for both iterations. That is,
 for every $(s,a)$, defining
 \ifarxiv
 $$c_{s,a}\eqdef \E_{\tau \sim P(\tau)}\left[\sum_{h=1}^H \mathbbm{1}[(s_h, a_h) = (s,a)]\right]$$
 \else 
 ${c_{s,a}\eqdef \E_{\tau \sim P(\tau)}\left[\sum_{h=1}^H \mathbbm{1}[(s_h, a_h) = (s,a)]\right],}$
 \fi 
 we have $c_{s,a}^{(1)} = c_{s,a}^{(2)}$. 

For a particular $(s,a)$, Chernoff bounds applied to the average observed counts $\widehat{c}_{s,a}^{(1)}$ and $ \widehat{c}_{s,a}^{(2)}$ show that they must both be close to their (shared) expectation with high probability. We draw a sample of 
\ifarxiv
\begin{align*}
    m 
    &\in O\left(\tfrac{|\states|^2|\actions|^2T^4\log(1/\rho)}{\rho^2}\right) \\ 
    & \in O\left(\tfrac{H^2\log(1/\rho)}{t^2} \right) & \text{ from } t \in O(\tfrac{H\rho}{|\states||\actions|T^2}) \\
    & \in \tilde{O}\left(\tfrac{H^2\log(1/\rho_K)}{t^2} \right) & \text{ from } \rho_K \in O(\tfrac{\rho}{|\states||\actions|T})
\end{align*}
\else
\begin{align*}
    m 
    &\in O\left(\tfrac{|\states|^2|\actions|^2T^4\log(1/\rho)}{\rho^2}\right) \in O\left(\tfrac{H^2\log(1/\rho)}{t^2} \right) \in \tilde{O}\left(\tfrac{H^2\log(1/\rho_K)}{t^2} \right) 
\end{align*}
\fi
trajectories, and each $c_{s,a} \in [0, H]$, so except with probability $4\exp\left(\tfrac{-2t^2m^2}{H^2m}\right) \in O(\rho_K)$,
\ifarxiv
\begin{align*}
|\widehat{c}_{s,a}^{(1)} - \widehat{c}_{s,a}^{(2)}|
&= \frac{1}{m}\left|\sum_{\tau \in S^{(1)}_1}\sum_{h=1}^H 
\mathbbm{1}[(s_h, a_h) = (s,a)] - \sum_{\tau \in S^{(2)}_1}\sum_{h=1}^H 
\mathbbm{1}[(s_h, a_h) = (s,a)] \right | \\
&\leq \Biggl|\tfrac{1}{m}\sum_{\tau \in S^{(1)}_1}\sum_{h=1}^H 
\mathbbm{1}[(s_h, a_h) = (s,a)] - c_{s,a}^{(1)} \Biggr|
 + \Biggl| \tfrac{1}{m}\sum_{\tau \in S^{(2)}_1}\sum_{h=1}^H \mathbbm{1}[(s_h, a_h) = (s,a)]  - c_{s,a}^{(1)} \Biggr| \in O(t).
\end{align*}
\else 
\begin{align*}
|\widehat{c}_{s,a}^{(1)} - \widehat{c}_{s,a}^{(2)}|
&\leq \Biggl|\tfrac{1}{m}\sum_{\tau \in S^{(1)}_1}\sum_{h=1}^H 
\mathbbm{1}[\tau_h = (s,a)] - c_{s,a}^{(1)} \Biggr|
 + \Biggl| \tfrac{1}{m}\sum_{\tau \in S^{(2)}_1}\sum_{h=1}^H \mathbbm{1}[\tau_h = (s,a)]  - c_{s,a}^{(1)} \Biggr| \in O(t),
\end{align*}
where $\tau_h \eqdef (s_h, a_h)$.
\fi 
Union bounding over all $s\in \states$ and $a\in\actions$ shows that the stated bound holds for all $(s,a)$ except with probability $\rho_K|\states||\actions|$.

We now show that $\modelmdp^{(1)} = \modelmdp^{(2)}$ at the end of the iteration, except with probability $O(\rho_K|\states||\actions| + \rho_{SQ}|K_i||\states|)$. Observe that $\modelmdp^{(1)} = \modelmdp^{(2)}$ at the end of the iteration unless at least one of the following two events occurs:
\ifarxiv
\begin{enumerate}
 \item $K_i^{(1)} \neq K_i^{(2)}$ - the set of new known $(s,a)$ pairs differs across the two runs.
 \item The updates to $\widehat{\transitions}_K(s' | s, a)$ and $\rew(s, a)$ differ for at least one $(s,a)$.
\end{enumerate}
\else 1) $K_1^{(1)} \neq K_1^{(2)}$ - the set of new known $(s,a)$ pairs differs across the two runs. 2) The updates to $\widehat{\transitions}_K(s' | s, a)$ and $\rew(s, a)$ differ for at least one $(s,a)$. 
\fi
The first event occurs exactly when $k'$ falls in between $n(s,a)^{(1)} + \widehat{c}_{s,a}^{(1)}$ and $n(s,a)^{(2)} + \widehat{c}_{s,a}^{(2)}$. We have already shown that 
$$| n(s,a)^{(1)} + \widehat{c}_{s,a}^{(1)} - n(s,a)^{(2)} - \widehat{c}_{s,a}^{(2)}| \in O(it + t) \in O(tT),$$
for a single $(s,a)$, except with probability $O(\rho_K)$. We have sampled $k'$ uniformly at random from an interval of width $w$, so it follows that
\ifarxiv 
$$\Pr_{k', S_1, S_2}[(s,a) \in K_i^{(1)} \triangle K_i^{(2)}] \in O(\rho_K + tT/w).$$
\else 
$\Pr_{k', S_1, S_2}[(s,a) \in K_i^{(1)} \triangle K_i^{(2)}] \in O(\rho_K + tT/w).$
\fi 
We took $t \in \tfrac{w\rho_K}{T}$, so by union bound over $\states \times \actions$, the probability of the first event is at most $O(|\states||\actions|\rho_K)$.

To bound the probability of the second event conditioned on the first event not occurring, it suffices to bound the probability that the updates to $\widehat{\transitions}_K(s' | s, a)$ for $(s,a) \in K_i$ differ across both runs, 
\ifarxiv
as rewards are assumed to be deterministic.
\fi
By the conditioning, we have $K_1^{(1)} = K_1^{(2)}$, so it suffices to show that each call to $\rstat$ returns the same value for both runs. Taking $\rho_{SQ} $, $\alpha_{SQ}$, and $\delta_{SQ}$ as the replicability, tolerance, and failure parameters respectively gives that a sample of size  
\ifarxiv
$$s\in O\left(\frac{|\states|^2\log(1/\delta_{SQ})}{(\varepsilon(\rho_{SQ} - 2\delta_{SQ}))^2 (1-\gamma)^4}\right)$$
\else
$s\in O(|\states|^2\log(1/\delta_{SQ})/((\varepsilon(\rho_{SQ} - 2\delta_{SQ}))^2 (1-\gamma)^4)$
\fi 
is sufficient, by Theorem~\ref{thm:rep-sq}. Furthermore, we have assumed that $1 - \gamma > \tfrac{\sqrt{\varepsilon}\log^{1/4}(1/\delta)}{H|\actions|\log^{1/4}(1/\rho)}$, ${\delta_{SQ} < \rho_{SQ}/4}$, and $\rho_{SQ} \in O(\rho/|\states|^2|\actions|)$, so a sample of size 
\ifarxiv
$$s\in O\left(\frac{|\states|^6|\actions|^6H^4\log(1/\rho)}{\varepsilon^4\rho^2 }\right)$$
\else
$s\in O\left(\frac{|\states|^6|\actions|^6H^4\log(1/\rho)}{\varepsilon^4\rho^2 }\right)$
\fi 
will also suffice. 
Each $(s,a)$ is added to $K_i$ only if it was visited at least $km$ times. We have taken $k =H$, $m \in O\left( \tfrac{|\states|^2|\actions|^2T^4\log(1/\rho)}{\rho}\right)$, and $T \in \Omega\left(\tfrac{|\states||\actions|H}{\varepsilon}\right)$. It follows that 
$mk \in O\left(\frac{|\states|^6|\actions|^6H^5\log(1/\rho)}{\varepsilon^4\rho^2 }\right)$ 
and therefore
$S[(s,a)]$ comprises at least $s$ i.i.d. samples from $P(\cdot \mid s,a)$, as desired. Union bounding over the $|K_i||\states|$ queries in the $i$th iteration gives a bound of $|K_i||\states|\rho_{SQ}$ on the probability of the second event, conditioned on the first event not happening.

We now assemble our inductive argument into a proof of the theorem. At the start of iteration $i$, the inductive hypothesis holds except with probability 
\ifarxiv 
$$ \sum_{j=1}^{i-1} \rho_K|\states||\actions| + \rho_{SQ}|K_j||\states|. $$
\else 
$ \sum_{j=1}^{i-1} \rho_K|\states||\actions| + \rho_{SQ}|K_j||\states|. $
\fi 
Noting that $\sum_{j=1}^T|K_j| \leq |\states||\actions|$, and recalling that we have taken replicability parameters $\rho_K \in O(\rho/(T|\states||\actions|))$ and ${\rho_{SQ} \in O(\rho/(|\states|^2|\actions|))}$, ensures we achieve a replicability parameter $\rho$ after the $T$ iterations of Algorithm~\ref{alg:repmax}. 
\end{proof}

\subsection{Limitations}

As mentioned previously, our bounds lose some of the properties that standard RL results provide, such as the ability to estimate value functions with only a logarithmic dependence on relevant parameters. We expect that some of the sample complexity overhead from achieving replicability is inevitable, as seen in the statistical query lower-bound of~\cite{impagliazzo2022reproducibility}. Nonetheless, we hope that future work can improve on the sample-complexities of our algorithms.

Our work is in part motivated by the recent replicability concerns in deep RL~\citep{islam2017reproducibility, henderson2018matters}. However, establishing formal guarantees in these highly complicated settings is often not easy. As such, our algorithms suffer the weakness that many theoretical results in RL have to deal with, namely their lack of immediate applicability to real-world problems. Yet, our empirical evaluation in section~\ref{sec:experiments} will show that there is hope for practical application. 

\section{Experiments} \label{sec:experiments}

While our asymptotic bounds have sample complexity overhead from the introduction of replicability, we would like to analyze the actual requirements in practice. We introduce a simple MDP in Figure~\ref{fig:policy_difference} that contains several ways of reaching the two goals. We analyze the impact of the number of calls to $\parallelsampling$ on replicability for $\rpvi$. In theory, our dependence on the number of calls is not logarithmic with respect to $|\states||\actions|$ but we would like to see if can draw a sample that is much smaller, maybe even on the order of the logarithmic requirement. We choose accuracy $\varepsilon = 0.02$, failure rate $\delta = 0.001$ and replicability $\rho = 0.2$. The number of calls that would be required by standard Phased Q-learning is at most $m \approx 13000$ (ignoring $\gamma$ factors). We take several multiples of $m$ and measure the fraction of identical and unique value functions, treating the $\rstat$ $\rho_{SQ}$ as a hyperparameter.

The results are presented Figure~\ref{fig:exp_results}, revealing that the number of samples needed to replicably produce the same value function can be several orders of magnitude lower than suggested by our bounds and that it is feasible to use a larger $\rho_{SQ}$ than theoretically required. This should allow us to scale to more complex problems in the future. The algorithm quickly produces a small set of value functions that may not be identical but, with a little more data, minor differences are removed. Note that using a replicable procedure naturally incurs overhead, which is expected. However, the overhead is significantly better than the theoretically required sample-size with squared $|\states||\actions|$ dependence. In the $\rstat$ procedure, taking smaller values for $\rho_{SQ}$ for a fixed sample should improve replicability at the cost of accuracy of query responses, by increasing the width of each subinterval of the partition so that there are fewer partition elements overall. The experiments highlight that, as long as sample sizes are sufficiently large and $\rho_{SQ}$ is chosen small enough, we achieve high replicability. 

\begin{figure}[t]
\centering
    \begin{subfigure}[b]{\textwidth}
        \centering
        \includegraphics[width=0.7\linewidth,clip,trim = 36px 4px 4px 4px]
        {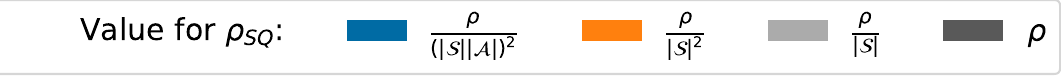}
    \end{subfigure} \\
    \begin{subfigure}[b]{0.48\linewidth}
        \centering
            \begin{overpic}[height=3.5cm]{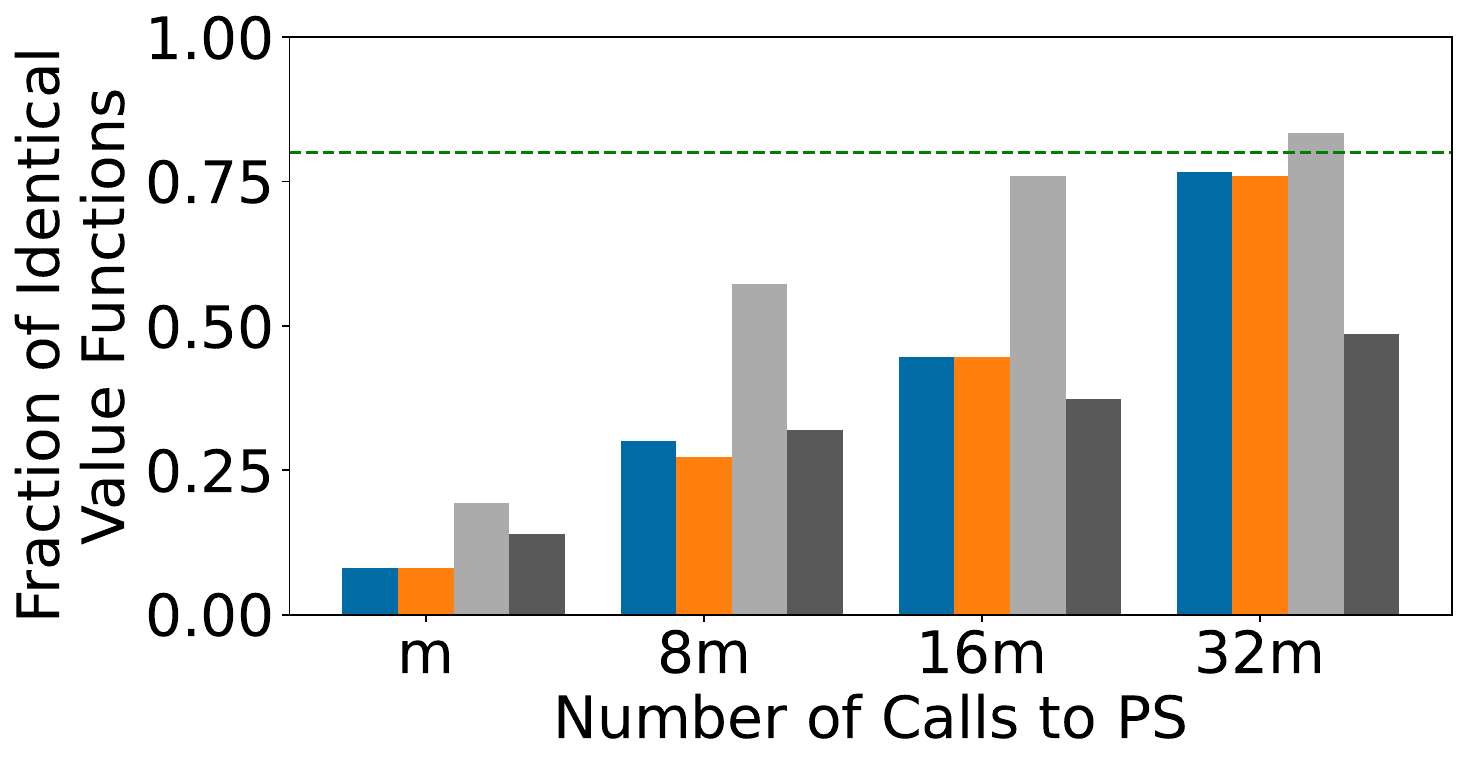}
            \put (30,32) {$\uparrow$}
            \put (25,27) {better}
            \put (25,43.5) {\small \color[rgb]{0,0.5,0} replicability threshold $1- \rho$}
            \end{overpic}
        \caption*{The largest percentage of identical value functions across $150$ runs. With more data, the quantity increases and the choice of $\rho_{SQ}$ becomes less important.}
    \end{subfigure}%
    \hfill
    \begin{subfigure}[b]{0.48\linewidth}
        \centering
            \begin{overpic}[height=3.5cm]{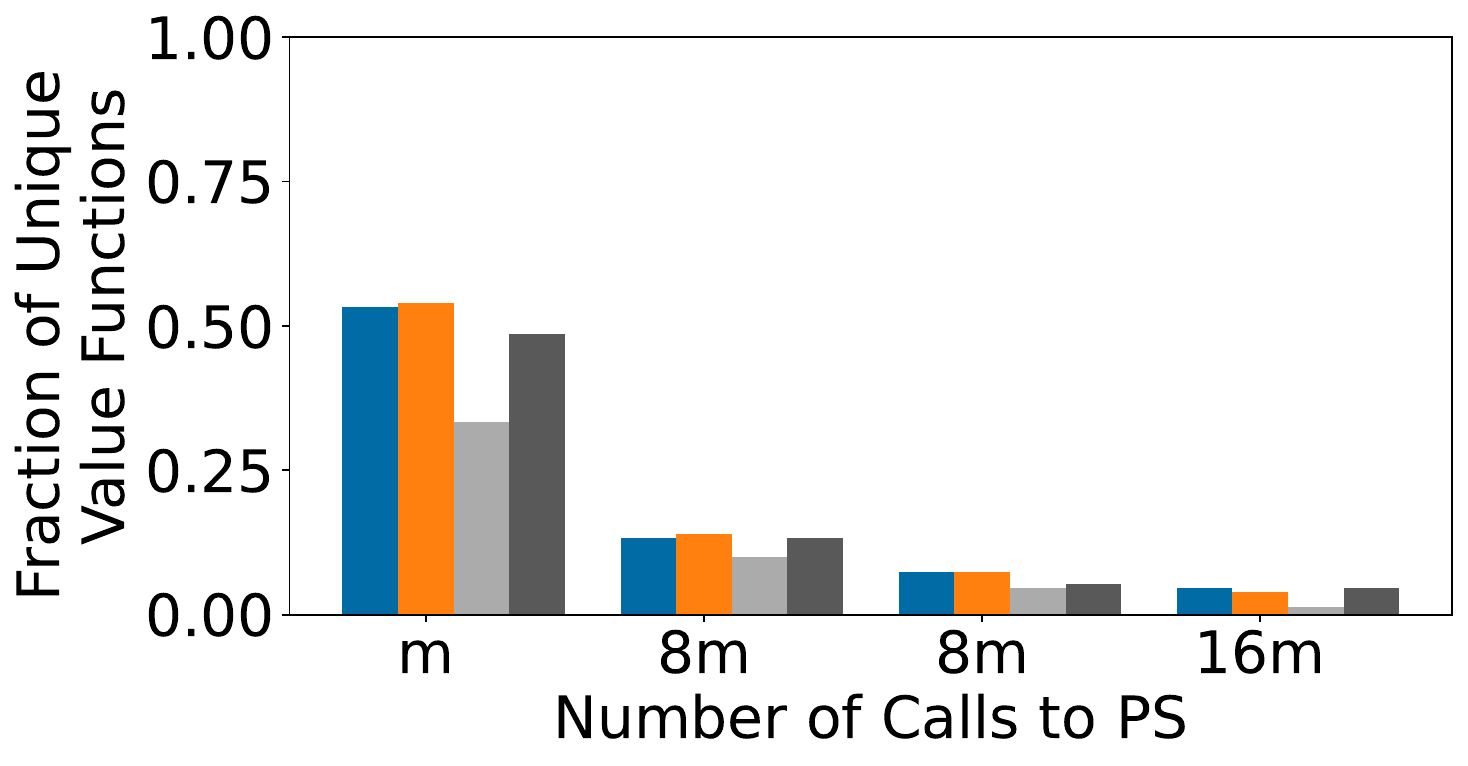}
            \put (80,40) {better}
            \put (85,35) {$\downarrow$}
            \end{overpic}
        \caption*{The percentage of unique value functions across $150$ runs. Varying $\rho_{SQ}$ has negligible impact, while more samples quickly reduce it.}
    \end{subfigure}%
    \caption{The $\rpvi$ algorithm evaluated on varying numbers of calls to $\parallelsampling$, with several values for the internal $\rstat$ parameter $\rho_{SQ}$. Results are provided across $150$ runs with different random sampling seeds. The number of calls is set to constant factor multiples of $m=13000$. The dotted green line denotes the replicability threshold of $1-\rho$. The results show that, in practice, the number of samples needed for replicability can be orders of magnitude lower than our bounds suggest.} 
    \label{fig:exp_results}
\end{figure}

\section{Related work} \label{sec:related}

Our work builds upon the foundational ideas by~\citet{impagliazzo2022reproducibility}, who introduce formal notions of replicability that are strongly related to robustness,  privacy, and generalization~\citep{BGHILPSS23, kalavasis2023statistical}. Building on these formal definitions of replicability, researchers have provided algorithms for replicable bandits~\citep{EKKKMV23} and replicable clustering~\citep{EKMVZ23}. \citet{ahn2022reproducibility} introduce algorithms for convex optimization using a slightly different notion of replicability. Our paper presents the first results for formally replicable algorithms in a control setting.

A concurrent and independent work by~\citet{karbasi2023replicability} also studies formal replicability of reinforcement learning. They also study the setting of discounted tabular MDPs, with access to a generative model, and show the same sample complexity upper-bounds for achieving replicable policy estimation in this setting that we prove in our work. Additionally, they provide a matching lower bound. They go on to consider two relaxed notions of replicability that allow them to provide improved sample complexity upper-bounds in the generative model setting. Our work instead considers a second setting, providing a first algorithm for replicable policy estimation in the episodic exploration setting. We also provide experimental validation of the practical feasibility of our Replicable Phased Value Iteration algorithm.

From an RL perspective, our work is strongly related to understanding exploration in MDPs~\citep{kearns1998nearopt, BraTen03, kakade2003samplecompl}. In the finite-horizon episodic setting, researchers made progress on upper bounds for exploration~\cite{auer2006logarithmic, auer2008nearopt, jaksch2010nearopt} that ultimately led to the development of a near-complete understanding of the problem~\citep{azar2017minimax, zanette2019tighter, simchowitz2019gap}. Lower bounds are provided  in other works~\citep{dann2015sample, osband2016lower}. Further, \citet{jin2020rewardfree, kaufmann2021rewardfree} provide results on a reward-free framework that allows for the optimization of any reward function. While a good amount of progress has been made on understanding the base problem, the notion of replicability is not considered in any of them.

Given the connections of replicability and robustness, our work is related but orthogonal to that of the study of worst-case optimal policies and value functions. These worst-case results are often obtained via the study of robust Markov decision processes, first introduced by~\citet{nilim2005robust, iyengar2005robust}. One line of work here has focused on relaxation of assumptions and combatting conservativeness in robust MDPs~\citep{wiesemann2013robust, mannor2016krectangular, petrik2019tight, panaganti2022sample}. Others have focused on various new formulations such as distributional robustness~\citep{xu2010distributionally, yu2016distributionally}. However, all of the above work focuses on understanding worst-cases and finding policies that do not have to be replicable. 

Finally, our work is related to efforts in practical RL to ensure replicability, such as benchmark design~\citep{guss2021minerl, mendez2022composuite} and robust implementation~\citep{nagarajan2018deterministic, seno2022d3rlpy} and evaluation~\citep{lynnerup2020survey, jordan2020evaluating, agarwal2021precipice}.

\section{Conclusion \& future work} \label{sec:conclusion}

We introduced the notion of formal replicability to the field of RL and established various novel algorithms for replicable RL. While these first results might have sub-optimal sample complexities, they highlight the crucial fact that replicability in RL is hard and requires study of the various aspects that impact it. We hope that future work can alleviate some of these efficiency challenges. A general open question is if replicable RL might simply be harder by nature than standard RL? This question needs to be posed on various levels because, as we argue in Section~\ref{sec:rep-rl}, finding a replicable policy might be easier than requiring the value function to be replicable. 
Finally, we believe the development of replicable algorithms for other settings such as the non-episodic setting as well as practical application are of great importance.

\ifarxiv
\else
\begin{ack}
The first and second authors are partially supported by the DARPA SAIL-ON program under contract HR001120C0040, the DARPA ShELL program under agreement HR00112190133, and the Army Research Office under MURI grant W911NF20-1-0080. The last author is supported in part by the Simons Collaboration on the Theory of Algorithmic Fairness, and NSF grant CCF-2217062.
\end{ack}
\fi

\bibliographystyle{plainnat}
\bibliography{references}

\newpage
\appendix

\section{Further definitions} \label{app:def}

\begin{definition}[Replicable value function estimation] \label{def:V-replicability}
Let $\algo$ be a policy estimation algorithm that outputs an estimated Q-value function $\widehat{Q}: \states \times \actions \rightarrow \mathbb{R}$, from which a policy may be computed, and where $\widehat{Q}$ is computed from a set of trajectories $S$ sampled from an MDP. Algorithm $\algo$ is $\rho$-replicable for value function estimation if, given independently sampled trajectory sets $S_1$ and $S_2$, and letting $\widehat{Q}^{* (1)}(s,a),  \gets \algo(S_1; r)$ and $\widehat{Q}^{* (2)}(s,a) \gets \algo(S_2; r)$, it holds for all states $s \in \states$ and actions $a \in \actions$ that
\begin{align*}
    &\Pr_{S_1, S_2, r}[\widehat{Q}^{* (1)}(s,a) \neq \widehat{Q}^{* (2)}(s,a) ] \leq \rho ,
\end{align*} 
where $r$ represents the internal randomness of $\algo$. Trajectory sets $S_1$ and $S_2$ may potentially be gathered from the environment during the execution of an RL algorithm.
\end{definition}

\begin{definition}[Replicable MDP estimation] \label{def:MDP-replicability}
Let $\algo$ be a policy estimation algorithm that outputs a model of an MDP $\widehat{\mathcal{M}}$, from which a policy may be computed, and where $\widehat{\mathcal{M}}$ is computed from a set of trajectories $S$, sampled from an MDP. Algorithm $\algo$ is $\rho$-replicable for MDP estimation if, given independently sampled trajectory sets $S_1$ and $S_2$, and letting $\widehat{\mathcal{M}}^{* (1)} \gets \algo(S_1; r)$ and $ \widehat{\mathcal{M}}^{* (2)}\gets \algo(S_2; r)$, it holds that
\begin{align*}
    &\Pr_{S_1, S_2, r}[\widehat{\mathcal{M}}^{* (1)} \neq \widehat{\mathcal{M}}^{* (2)}] \leq \rho,
\end{align*} 
where $r$ represents the internal randomness of $\algo$. Trajectory sets $S_1$ and $S_2$ may potentially be gathered from the environment during the execution of an RL algorithm.
\end{definition}

\section{Proofs}

\subsection{$\rpvi$ convergence for Lemma~\ref{lem:phasedvi-convergence}}  \label{app:phasedvi-convergence}

\begin{proof}
The proof closely follows that of \cite{kearns1998finitesample}. We want to prove that after $T$ iterations of Replicable Phased Value Iteration, it holds that 
\begin{equation*}
    \norm[\infty]{\widehat{Q}_{T}(s, a) - Q^{*}(s, a)} \leq \varepsilon.
\end{equation*}
We can decompose this into two steps by bounding the error introduced from sampling and the error introduced via only running for $T$ iterations using the triangle inequality.
\begin{equation*}
    \norm[\infty]{\widehat{Q}_{T}(s, a) - Q^{*}(s, a)} \leq \norm[\infty]{\widehat{Q}_{T}(s, a) - Q_{T}(s, a)} + \norm[\infty]{Q_{T}(s, a) - Q^{*}(s, a)}  
\end{equation*}

Note that as long as we choose the number of samples to be sufficiently large, our statistical queries will give us accuracy guarantees because for every call to $\parallelsampling$ we get a sample for every state-action pair. These samples are i.i.d. and across state-action pairs they are independent. So, suppose that the values $\hat{V}_t(s')$ from the $\rstat$ procedure can be estimated accurately such that the following probabilities are bounded
\begin{equation*}
    \forall (s, a) \in \states \times \actions, 0 \leq t \leq T \text{, } \Pr\left(\left|\hat{V}_{t} (s') - \E_{s' \sim \transitions} \left[\hat{V}_{t} (s')\right]\right| \geq \alpha\right).
\end{equation*}
Now, to bound the first term, we can derive a recurrence relation as follows.
\begin{align*}
    \norm[\infty]{\widehat{Q}_{t + 1}(s, a) - Q_{t + 1}(s, a)} 
    &= \max_{(s, a)} |\widehat{Q}_{t + 1}(s, a) - Q_{t + 1}(s, a)| \\
    &= \max_{(s, a)} \left|\rew(s, a) + \gamma \hat{V}_{t} (s') - \rew(s, a) - \gamma \E_{s' \sim \transitions}[V_{t} (s')]\right| \\
    &= \max_{(s, a)} \left|\gamma \hat{V}_{t} (s') - \gamma \E_{s' \sim \transitions}[V_{t} (s')]\right| \\
    &= \gamma \left|\hat{V}_{t} (s') - \E_{s' \sim \transitions}[\hat{V}_{t} (s')] + \E_{s' \sim \transitions}[\hat{V}_{t} (s')] - \E_{s' \sim \transitions}[V_{t} (s')]\right| \\
    &\leq \gamma \left|\hat{V}_{t} (s') - \E_{s' \sim \transitions}[\hat{V}_{t} (s')]\right|+ \gamma \left|\E_{s' \sim \transitions}[\hat{V}_{t} (s')] - \E_{s' \sim \transitions}[V_{t} (s')]\right| \\
    &\leq \gamma \alpha + \gamma \left|\E_{s' \sim \transitions}[\hat{V}_{t} (s')] - \E_{s' \sim \transitions}[V_{t} (s')]\right| \\
    &\leq \gamma \alpha + \gamma \max_s \left|\hat{V}_{t} (s) - V_{t} (s) \right| \\
    &\leq \gamma \alpha + \gamma \max_{(s, a)} \left|\hat{Q}_{t} (s, a) - Q_{t} (s, a) \right| \\
    &\leq \gamma \alpha + \gamma \norm[\infty]{\hat{Q}_{t} (s, a) - Q_{t} (s, a)} \\
\end{align*}
At $t = 0$, it holds that $\hat{Q}_0 = Q_0 , \forall (s, a) \in \states \times \actions$. As a result, the previous result forms a geometric series and for any $t$
\begin{equation*}
    \norm[\infty]{\hat{Q}_{t} (s, a) - Q_{t} (s, a)} \leq \alpha \cfrac{\gamma}{1 - \gamma}.
\end{equation*}
We upper bound the second term in the triangle inequality using the standard Bellman operator defined as
\begin{equation}
    (\bellopt Q)(s, a) = \rew(s, a) + \gamma \E_{s' \sim \transitions}[V_{t} (s')]
\end{equation}
as follows
\begin{align*}
    \norm[\infty]{Q_{t}(s, a) - Q^*(s, a)} 
    &= \max_{(s, a)} |Q_{t}(s, a) - Q^*(s, a)| \\
    &= \max_{(s, a)} |\bellopt^{t} Q_{0}(s, a) - \bellopt^{t} Q^*(s, a)| \\
    &\leq \gamma^{t} \max_{(s, a)} |Q_{0}(s, a) - Q^*(s, a)| \\
    &= \gamma^{t} \max_{(s, a)} |Q^*(s, a)| \\
    &\leq \cfrac{\gamma^{t}}{1 - \gamma}. \\
\end{align*} 
As a result, we obtain that
\begin{align*}
    \norm[\infty]{\hat{Q}_{T}(s, a) - \optq} &\leq \alpha \cfrac{\gamma}{1 - \gamma} + \cfrac{\gamma^{T}}{1 - \gamma} = \alpha \cfrac{\gamma}{1 - \gamma} + \cfrac{(1 - (1 - \gamma))^{T}}{1 - \gamma} \\
    &\leq \alpha \cfrac{\gamma}{1 - \gamma} + \cfrac{e^{-(1 - \gamma) T}}{1 - \gamma}
\end{align*}
Now, all we need to do is choose $\alpha$ and $T$ accordingly. If we choose $T \geq \log \left(\cfrac{2}{(1-\gamma)^2 \varepsilon}\right) / (1 - \gamma)$ and we pick $\alpha = (1-\gamma) \cfrac{\varepsilon}{2}$ we obtain
\begin{equation*}
    \norm[\infty]{\hat{Q}_{T}(s, a) - \optq} \leq \gamma \cfrac{\varepsilon}{2} + \frac{\varepsilon}{2} (1 - \gamma) = \cfrac{\varepsilon}{2}.
\end{equation*}
\end{proof}

\subsection{Proof of Theorem~\ref{thm:rpvi}} \label{app:rpvi-prf}

\begin{proof}
We must show that the algorithm is replicable and that the accuracy constraints are not violated. Suppose that $m$ is sufficiently large to guarantee replicable as well as sufficiently accurate estimates. We show by induction that this yields replicability across two runs. Then we use a standard contraction argument to ensure policy convergence.

First, fix some MDP $\mdp$ and consider two independent runs of the Replicable Phased Value Iteration algorithm with shared internal randomness $r$. Let $S^{(i)}$ denote the set of transitions drawn and $V^{(i)}$ the value function in the $i$th run. Suppose that $m$ is sufficiently large such that our statistical query estimate yields replicable values estimates such that for all $s \in \states$, $t \in T$, it holds that $\widehat{V}_{t}^{(1)}(s') = \widehat{V}_{t}^{(2)}(s')$.
We show via induction on $t$ that the Q-function is exactly the same across both runs at every step of Replicable Phased Value Iteration. 
Let $\widehat{Q}^{(1)}_t$ and $\widehat{Q}^{(2)}_t$ be the two Q-functions of the first and second run at iteration $t$ respectively. \\
\textbf{Base Case:} In the base case at $t=0$, by choice of our intialization for the Q-functions, it holds that $\widehat{Q}^{(1)}_0 = \widehat{Q}^{(2)}_0 = \Vec{0}$ which is always replicable. \\
\textbf{Inductive step:} Suppose that $\widehat{Q}^{(1)}_t = \widehat{Q}^{(2)}_t$. After one more iteration of value updates, 
\begin{align*}
   \widehat{Q}^{(1)}_{t+1}(s, a) &\gets \rew(s, a) + \widehat{V}^{(1)}_{t}(s') \quad \land \quad  \widehat{Q}^{(2)}_{t+1}(s, a) \gets \rew(s, a) + \widehat{V}^{(1)}_{t}(s') \\
   &\implies \widehat{Q}^{(1)}_{t+1} = \widehat{Q}^{(2)}_{t+1} \enspace ,
\end{align*} where we used the fact that rewards are deterministic and $\widehat{V}^{(1)}_{t}(s') = \widehat{V}^{(2)}_{t}(s')$ is computed to be exactly the same by assumption. \\ 
Finally, since $\widehat{Q}^{(1)}_t = \widehat{Q}^{(2)}_t$ it also holds for all states $s \in \states$ that $\max_a \widehat{Q}^{(1)}_t(s, a) = \max_a \widehat{Q}^{(2)}_t(s, a)$.  The procedure maintains the exact same Q-function across two runs which yield the same policy.

To show convergence to an $\varepsilon$-optimal policy, we can use a standard contraction argument provided in Lemma~\ref{lem:phasedvi-convergence}. If our value estimates are not too far off from their expectation which can be ensured via sufficiently large sample size for the statistical query procedure.

It remains to show that our sample size is sufficiently large to ensure both replicability as well as accuracy. 
For this we are interested in the following two quantities $\forall (s, a) \in \states \times \actions, t \in [0, T], $
\begin{align*}
    \Pr\left[\widehat{V}_t(s) - \E_{s \sim \transitions}\left[\widehat{V}_t(s)\right] > \alpha\right] &\leq \delta_{SQ}
    & 
    \Pr\left[\widehat{V}^{(1)}_t(s) \neq \widehat{V}^{(2)}_t(s)\right] &\leq \rho_{SQ} \enspace .
\end{align*} To ensure the first probability holds, we require that our statistical queries return sufficiently accurate estimates. For this we take a closer look at how the replicable statistical queries give us this guarantee. In the replicable statistical query procedure, the error is split into a sample approximation error and the error from discretization 
\begin{equation*}
    \alpha = \cfrac{\alpha (\rho_{SQ} - 2 \delta_{SQ})}{\rho_{SQ} +1 - 2 \delta_{SQ}} + \cfrac{\alpha}{\rho_{SQ} +1 - 2 \delta_{SQ}} = \alpha' + \cfrac{\beta}{2} \enspace
\end{equation*}

where $\beta$ is the bin size of discretization that is chosen according to the original $\rstat$ procedure. By union bound and Chernoff inequality we have that 
\begin{align*}
    & \Pr\left[\bigcup_{(s, a), t} \left(\widehat{V}_t(s) - \E_{s \sim \transitions}\left[\widehat{V}_t(s)\right] > \alpha' \right)\right] \leq |\states||\actions|T e^{-2m\alpha'^2} \leq \delta \\
    & \implies m \geq \cfrac{1}{2\alpha'^2} \log\left(\cfrac{2 |\states| |\actions|T}{\delta}\right) \enspace.
\end{align*}
As long as we pick $m$ at least this large, our value estimates will be accurate. Finally, we are interested in the probability that in two separate runs, $\rstat$ fails to output the same estimate for one expected value computation. Conditioning on accurate estimation in each run, the probability that two estimates fall into different regions in the $\rstat$ procedure is given by $2 \alpha' / \beta$. Again via union bound, we have that 
\begin{align*}
    \Pr\left[\bigcup_{(s, a), t} \left(\widehat{V}^{(1)}_t(s) \neq \widehat{V}^{(2)}_t(s)\right] \right) 
    &\leq |S||\actions|T (2 \alpha' / \beta)  \\
    &= |S||A|T \rho_{SQ} - 2\delta \enspace.
\end{align*}

As long as we pick $\rho_{SQ} = \rho / |S||A|T$, we are guaranteed with probabability $\rho$ that all estimates will be replicable. Plugging this back into our sample complexity, we obtain
\begin{align*}
    \cfrac{(\rho_{SQ} + 1 - 2 \delta_{SQ})^2}{2\alpha^2 (\rho_{SQ} - 2\delta_{SQ})^2} \log\left(\cfrac{2 |\states| |\actions|T}{\delta}\right) 
    &\leq \cfrac{4}{2\alpha^2 (\rho_{SQ} - 2\delta_{SQ})^2} \log\left(\cfrac{2 |\states| |\actions|T}{\delta}\right) \\
    &= \cfrac{2(|S||A|T)^2}{\alpha^2 (\rho - 2\delta)^2} \log\left(\cfrac{2 |\states| |\actions|T}{\delta}\right) 
    \leq m \enspace .
\end{align*}

Setting $\alpha$ and $T$ according to the convergence criteria in Appendix~\ref{app:phasedvi-convergence} concludes the proof.
\end{proof}

\subsubsection{Replicable approximate MDPs} \label{app:rep-app-mpd}

Note that the transition model built in standard Phased Q-learning is very sparse and so are the transitions that are implicitly used in every statisical query of our algorithm. The number of samples that are used to estimate transition probabilities of a single state are of size $\Tilde{O}(\log(|\states||\actions|))$ while the vector that represents the full probability vector is of size $|\states|$. This open up the question whether we would be able to replicably approximate the full model of the MDP rather than just obtaining estimates of values. We show that is in fact possible to obtain an exactly replicable MDP in algorithm~\ref{alg:rep-mdp}.

\begin{algorithm}[h]
\caption{Replicable ApproximateMDP \\
Parameters: accuracy $\epsilon$, failure probability $\delta$, replicability failure probability $\rho$ \\
Input: Generative Model $\gm$ \\
Output: \\}
\label{alg:rep-mdp}
\begin{algorithmic}
   \State For all $s \in \states$, let $\phi_{s}(s')\coloneqq \mathbbm{1}[s = s']$
    \For{$(s, a, s') \in \states \times \actions \times \states$}
        \State $S \gets (\gm(s, a))^m$ \Comment{\parbox[t]{.55\linewidth}{do $m$ calls to $\gm$ and store next states in a set $S$.}}
        \State $\widehat{P}(s' | s, a) = \rstat(S[s, a], \phi_{s}(s'))$
        \State $\widehat{\rew}(s, a) = \rew(s, a)$
    \EndFor
    \State \Return $\widehat{\mdp}$ built from $\widehat{P}(\cdot | s, a)$ and $\widehat{r}$
\end{algorithmic}
\end{algorithm}

While our $\rpvi$ algorithm achieves cubed dependence on $|S|$, trying to obtain replicable transition dynamics is significantly harder using the $\rstat$ approach as we show in the following Observation~\ref{obs:rep-mdp}.
 
\begin{observation} \label{obs:rep-mdp}
    Let $\mdp$ be a fixed MDP and assume access to a generative model $\gm$. Let $\epsilon \in [0, 1]$ be the accuracy parameter, $\rho \in [0, 1]$ be the replicability parameter. Suppose
    \begin{equation*}
        m = O\left(\frac{|S|^5|\actions|^3}{\varepsilon^2 (\rho - 2\delta)^2} \log \left( 
        \frac{|S||\actions|}{\delta}\right)\right).
    \end{equation*}
    is the number of calls to $\gm$ for every $(s, a, s')$ tuple, it holds for all $(s, a, s')$ across two runs that 
    \begin{equation}
        \Pr[|\transitions(s'|s, a) - \widehat{\transitions}(s'|s, a)| \geq \varepsilon] \in O(\delta) \quad \land \quad Pr[\widehat{\transitions}^{(1)}(s'|s, a) \neq \widehat{\transitions}^{(2)}(s'|s, a)] \in O(\rho)
    \end{equation}
    where $\widehat{\transitions}^{(i)}$ is our approximation of the transitions $\transitions$ in the $i$th run.
\end{observation}

\begin{prfsketch}The analysis that falls out of using statistical queries for the model approximation requires us to distribute the probability or replicability failure across all possible state-action-state tuples. The proof then is similar to that of $\rpvi$. We use Chernoff bounds to get a sample-complexity for failure and reproducbility but this time we need to union bound over all of $\states \times \actions \times \states$. Since the union bound dependency from the $\rstat$ procedure enters our sample size quadratically, we end up picking $\rho_{SQ} = \rho / (|S|^2|A|)$ and $\delta_{SQ} = \delta / (|S|^2|A|)$. Then, we have consider sampling data for every $(s, a)$ tuple which leads to the bound in Observation~\ref{obs:rep-mdp}. This highlights the difficulty of the statistical query approach for full model-based reinforcement learning. It is, however, not unlikely that more refined tools that utilize vector concentrations could lead to improved sample complexities for replicably approximate MDPs.
\end{prfsketch}

\ifarxiv
\else 

\fi 

\section{Computational requirements}

Our code is written in Python and mostly uses functions from the numpy library for parallelization. Our algorithms can easily run on house-hold grade computers using central processing units (CPUs) with 2-4 cores. Yet, depending on the speed of the CPUs and the chosen sample-size one run may take up to 4 hours. Most of this runtime comes from numpy's sampling procedures. For our experiments, we had access to $3$ Lambda server machines with AMD EPYC\textsuperscript{\texttrademark} CPUs and $128$-thread support.

\end{document}